\relax
\documentclass[letterpaper]{article} 
\usepackage{aaai22}  
\usepackage{times}  
\usepackage{helvet}  
\usepackage{courier}  
\usepackage[hyphens]{url}  
\usepackage{graphicx} 
\urlstyle{rm} 
\usepackage{natbib}  
\usepackage{caption} 
\DeclareCaptionStyle{ruled}{labelfont=normalfont,labelsep=colon,strut=off} 
\frenchspacing  
\setlength{\pdfpagewidth}{8.5in}  
\setlength{\pdfpageheight}{11in}  
%

\usepackage{soul}

%
%
%
\pdfinfo{
/Title (AAAI Press Formatting Instructions for Authors Using LaTeX -- A Guide)
/Author (AAAI Press Staff, Pater Patel Schneider, Sunil Issar, J. Scott Penberthy, George Ferguson, Hans Guesgen, Francisco Cruz, Marc Pujol-Gonzalez)
/TemplateVersion (2022.1)
}

\setcounter{secnumdepth}{0} 

%


\usepackage[utf8]{inputenc} 
\usepackage[T1]{fontenc}    
\usepackage{hyperref}       
\usepackage{url}            
\usepackage{booktabs}       
\usepackage{amsfonts}       
\usepackage{nicefrac}       
\usepackage{microtype}      
\usepackage{xcolor}         

\usepackage{microtype}
\usepackage{graphicx}
\usepackage{subfigure}
\usepackage{booktabs} 

\usepackage{paralist}
\usepackage{todonotes}
\usepackage{bm}
\usepackage{siunitx}

\usepackage{hyperref}


\usepackage{amsmath}
\usepackage{amssymb}
\usepackage{amsfonts}
\usepackage{amsthm}
\usepackage{mathrsfs, euscript}
\usepackage{color}
\usepackage{xcolor}
\usepackage{nccmath}
\usepackage{wrapfig}
\usepackage[ruled,vlined,linesnumbered ]{algorithm2e}
\DeclareMathOperator*{\argmax}{arg\,max}
\DeclareMathOperator*{\argmin}{arg\,min}

\title{Context Uncertainty in Contextual Bandits\\with Applications to Recommender Systems}

\author{
    Hao Wang\textsuperscript{\rm 1,2}\thanks{Work done while at AWS AI Labs.},
    Yifei Ma\textsuperscript{\rm 1},
    Hao Ding\textsuperscript{\rm 1},
    Yuyang Wang\textsuperscript{\rm 1}
}
\affiliations{
    \textsuperscript{\rm 1}AWS AI Labs \textsuperscript{\rm 2}Department of Computer Science, Rutgers University\\    


    hw488@cs.rutgers.edu, \{yifeim,haodin,yuyawang\}@amazon.com
%
}

\usepackage{bibentry}

\begin{document}

\maketitle

\begin{abstract}
Recurrent neural networks have proven effective in modeling sequential user feedbacks for recommender systems. However, they usually focus solely on item relevance and fail to effectively explore diverse items for users, therefore harming the system performance in the long run. To address this problem, we propose a new type of recurrent neural networks, dubbed recurrent exploration networks (REN), to jointly perform representation learning and effective exploration in the latent space. REN tries to balance relevance and exploration while taking into account the uncertainty in the representations. Our theoretical analysis shows that REN can preserve the rate-optimal sublinear regret even when there exists uncertainty in the learned representations. Our empirical study demonstrates that REN can achieve satisfactory long-term rewards on both synthetic and real-world recommendation datasets, outperforming state-of-the-art models. 
\end{abstract}

\def\Blue{\color{blue}}
\def\Purple{\color{purple}}

\def\A{{\bf A}}
\def\a{{\bf a}}
\def\B{{\bf B}}
\def\b{{\bf b}}
\def\C{{\bf C}}
\def\c{{\bf c}}
\def\D{{\bf D}}
\def\d{{\bf d}}
\def\E{{\bf E}}
\def\e{{\bf e}}
\def\f{{\bf f}}
\def\F{{\bf F}}
\def\K{{\bf K}}
\def\k{{\bf k}}
\def\L{{\bf L}}
\def\H{{\bf H}}
\def\h{{\bf h}}
\def\G{{\bf G}}
\def\g{{\bf g}}
\def\I{{\bf I}}
\def\J{{\bf J}}
\def\R{{\bf R}}
\def\X{{\bf X}}
\def\Y{{\bf Y}}
\def\OO{{\bf O}}
\def\oo{{\bf o}}
\def\P{{\bf P}}
\def\p{{\bf p}}
\def\Q{{\bf Q}}
\def\q{{\bf q}}
\def\r{{\bf r}}
\def\s{{\bf s}}
\def\S{{\bf S}}
\def\t{{\bf t}}
\def\T{{\bf T}}
\def\x{{\bf x}}
\def\y{{\bf y}}
\def\z{{\bf z}}
\def\Z{{\bf Z}}
\def\M{{\bf M}}
\def\m{{\bf m}}
\def\n{{\bf n}}
\def\U{{\bf U}}
\def\u{{\bf u}}
\def\V{{\bf V}}
\def\v{{\bf v}}
\def\W{{\bf W}}
\def\w{{\bf w}}
\def\0{{\bf 0}}
\def\1{{\bf 1}}

\def\AM{{\mathcal A}}
\def\EM{{\mathcal E}}
\def\FM{{\mathcal F}}
\def\TM{{\mathcal T}}
\def\UM{{\mathcal U}}
\def\XM{{\mathcal X}}
\def\YM{{\mathcal Y}}
\def\NM{{\mathcal N}}
\def\OM{{\mathcal O}}
\def\IM{{\mathcal I}}
\def\GM{{\mathcal G}}
\def\PM{{\mathcal P}}
\def\LM{{\mathcal L}}
\def\MM{{\mathcal M}}
\def\DM{{\mathcal D}}
\def\SM{{\mathcal S}}
\def\ZM{{\mathcal Z}}
\def\RB{{\mathbb R}}
\def\EB{{\mathbb E}}
\def\VB{{\mathbb V}}

\def\tx{\tilde{\bf x}}
\def\ty{\tilde{\bf y}}
\def\tz{\tilde{\bf z}}
\def\hd{\hat{d}}
\def\HD{\hat{\bf D}}
\def\hx{\hat{\bf x}}
\def\hR{\hat{R}}

\def\Ome{\mbox{\boldmath$\omega$\unboldmath}}
\def\Om{\mbox{\boldmath$\Omega$\unboldmath}}
\def\bet{\mbox{\boldmath$\beta$\unboldmath}}
\def\et{\mbox{\boldmath$\eta$\unboldmath}}
\def\ep{\mbox{\boldmath$\epsilon$\unboldmath}}
\def\ph{\mbox{\boldmath$\phi$\unboldmath}}
\def\Pii{\mbox{\boldmath$\Pi$\unboldmath}}
\def\pii{\mbox{\boldmath$\pi$\unboldmath}}
\def\Ph{\mbox{\boldmath$\Phi$\unboldmath}}
\def\Ps{\mbox{\boldmath$\Psi$\unboldmath}}
\def\tha{\mbox{\boldmath$\theta$\unboldmath}}
\def\Tha{\mbox{\boldmath$\Theta$\unboldmath}}
\def\muu{\mbox{\boldmath$\mu$\unboldmath}}
\def\Muu{\mbox{\boldmath$\mathcal{M}$\unboldmath}}
\def\Si{\mbox{\boldmath$\Sigma$\unboldmath}}
\def\si{\mbox{\boldmath$\sigma$\unboldmath}}
\def\Gam{\mbox{\boldmath$\Gamma$\unboldmath}}
\def\gamm{\mbox{\boldmath$\gamma$\unboldmath}}
\def\Lam{\mbox{\boldmath$\Lambda$\unboldmath}}
\def\De{\mbox{\boldmath$\Delta$\unboldmath}}
\def\vps{\mbox{\boldmath$\varepsilon$\unboldmath}}
\def\Up{\mbox{\boldmath$\Upsilon$\unboldmath}}
\def\xii{\mbox{\boldmath$\xi$\unboldmath}}
\def\Xii{\mbox{\boldmath$\Xi$\unboldmath}}
\def\Lap{\mbox{\boldmath$\LM$\unboldmath}}
\newcommand{\ti}[1]{\tilde{#1}}

\def\tr{\mathrm{tr}}
\def\etr{\mathrm{etr}}
\def\etal{{\em et al.\/}\,}
\newcommand{\indep}{{\;\bot\!\!\!\!\!\!\bot\;}}
\def\argmax{\mathop{\rm argmax}}
\def\argmin{\mathop{\rm argmin}}
\def\vec{\text{vec}}
\def\cov{\text{cov}}
\def\dg{\text{diag}}

\newtheorem{observation}{\textbf{Observation}}
\newtheorem{remark}{Remark}
\newtheorem{theorem}{Theorem}
\newtheorem{lemma}{Lemma}
\newtheorem{definition}{Definition}
\newtheorem{problem}{Problem}
\newtheorem{proposition}{Proposition}
\newtheorem{cor}{Corollary}
\newtheorem{assup}{Assumption}

\newcommand{\tabref}[1]{Table~\ref{#1}}
\newcommand{\secref}[1]{Sec.~\ref{#1}}
\newcommand{\figref}[1]{Fig.~\ref{#1}}
\newcommand{\lemref}[1]{Lemma~\ref{#1}}
\newcommand{\thmref}[1]{Theorem~\ref{#1}}
\newcommand{\clmref}[1]{Claim~\ref{#1}}
\newcommand{\crlref}[1]{Corollary~\ref{#1}}
\newcommand{\eqnref}[1]{Eqn.~\ref{#1}}
\newcommand{\algref}[1]{Algorithm~\ref{#1}}
\newcommand{\bernie}[1]{\textcolor{blue}{BW: #1}}
\newcommand{\hw}[1]{\textcolor{red}{HW:#1}}
\newcommand{\bx} {{\bm x}}

\section{Introduction}
\label{sec:intro}
Modeling and predicting sequential user feedbacks is a core problem in modern e-commerce recommender systems. In this regard, recurrent neural networks (RNN) have shown great promise since they can naturally handle sequential data~\citep{GRU4Rec,HGRU,belletti2019quantifying,HRNN}. While these RNN-based models can effectively learn representations in the latent space to achieve satisfactory immediate recommendation accuracy, they typically focus solely on relevance and fall short of effective exploration in the latent space, leading to poor performance in the long run. For example, a recommender system may keep recommending action movies to a user once it learns that she likes such movies. This may increase immediate rewards, but the lack of exploration in other movie genres can certainly be detrimental to long-term rewards. 

So, how does one effectively explore diverse items for users while retaining the representation power offered by RNN-based recommenders. We note that the learned representations in the latent space are crucial for these models' success. Therefore we propose recurrent exploration networks (REN) to explore diverse items in the latent space learned by RNN-based models. REN tries to balance relevance and exploration during recommendations using the learned representations. 

One roadblock is that effective exploration relies heavily on well learned representations, which in turn require sufficient exploration; this is a chicken-and-egg problem. In a case where RNN learns unreasonable representations (e.g., all items have the same representations), exploration in the latent space is meaningless. To address this problem, we enable REN to take into account the uncertainty of the learned representations as well during recommendations. Essentially items whose representations have higher uncertainty can be explored more often. Such a model can be seen as a contextual bandit algorithm that is aware of the uncertainty for each context. Our contributions are as follows:
\begin{enumerate}
    \item We propose REN as a new type of RNN to balance relevance and exploration during recommendation, yielding satisfactory long-term rewards.
    \item Our theoretical analysis shows that there is an upper confidence bound related to uncertainty in learned representations. With such a bound implemented in the algorithm, REN can achieve the same rate-optimal sublinear regret. To the best of our knowledge, we are the first to study the regret bounds under ``context uncertainty''. 
    \item Experiments of joint learning and exploration on both synthetic and real-world temporal datasets show that REN significantly improve long-term rewards over state-of-the-art RNN-based recommenders.
\end{enumerate}


\section{Related Work}\label{sec:related}
\textbf{Deep Learning for Recommender Systems.}
Deep learning (DL) has been playing a key role in modern recommender systems~\citep{RBM4CF,deepmusic,CDL,RSDAE,CRAE,CVAE,chen2019top,fang2019deep,tang2019towards,ZESRec,ExposureBias}. \cite{RBM4CF} uses restricted Boltzmann machine to perform collaborative filtering in recommender systems. Collaborative deep learning (CDL)~\cite{CDL,CRAE,CVAE} is devised as Bayesian deep learning models~\cite{BDL,BDLSurvey,BDLThesis} to significantly improve recommendation performance. In terms of sequential (or session-based) recommender systems~\citep{GRU4Rec,HGRU,TCN,li2017neural,liu2018stamp,wu2019session,HRNN}, GRU4Rec~\citep{GRU4Rec} was first proposed to use gated recurrent units (GRU)~\citep{GRU}, an RNN variant with gating mechanism, for recommendation. Since then, follow-up works such as hierarchical GRU~\citep{HGRU}, temporal convolutional networks (TCN)~\citep{TCN}, and hierarchical RNN (HRNN)~\citep{HRNN} have tried to achieve improvement in accuracy with the help of cross-session information~\citep{HGRU}, causal convolutions~\citep{TCN}, as well as control signals~\citep{HRNN}. We note that our REN does not assume specific RNN architectures (e.g., GRU or TCN) and is therefore \emph{compatible with different RNN-based (or more generally DL-based) models}, as shown in later sections.

\textbf{Contextual Bandits.}
Contextual bandit algorithms such as LinUCB~\citep{li2010contextual} and its variants~\citep{LSB,taming,CFBandit,LRBandit,practical,korda2016distributed,mahadik2020fast,NeuralUCB} have been proposed to tackle the exploitation-exploration trade-off in recommender systems and successfully improve upon context-free bandit algorithms~\citep{LinRel}. 
Similar to~\cite{LinRel}, theoretical analysis shows that LinUCB variants could achieve a rate-optimal regret bound~\citep{chu2011contextual}. However, these methods either assume observed context~\citep{NeuralUCB} or are incompatible with neural networks~\citep{CFBandit,LSB}. In contrast, REN as a contextual bandit algorithm runs in the latent space and assumes user models based on RNN; therefore it is compatible with state-of-the-art RNN-based recommender systems.


\textbf{Diversity-Inducing Models.}
Various works have focused on inducing diversity in recommender systems~\citep{nguyen2014exploring,antikacioglu2017post,wilhelm2018practical,bello2018seq2slate}. Usually such a system consists of a submodular function, which measures the diversity among items, and a relevance prediction model, which predicts relevance between users and items. Examples of submodular functions include the probabilistic coverage function~\citep{CLSB} and facility location diversity (FILD)~\citep{FILD}, while relevance prediction models can be Gaussian processes~\citep{DivGP}, linear regression~\citep{LSB}, etc. These models typically focus on improving \emph{diversity among recommended items in a slate} at the cost of accuracy. In contrast, REN's goal is to optimize for long-term rewards through improving \emph{diversity between previous and recommended items}.  
We include some slate generation in our real-data experiments for completeness.



\section{Recurrent Exploration Networks}\label{sec:rdn}
In this section we first describe the general notations and how RNN can be used for recommendation, briefly review determinantal point processes (DPP) as a diversity-inducing model as well as their connection to exploration in contextual bandits, and then introduce our proposed REN framework.

\subsection{Notation and RNN-Based Recommender Systems}\label{sec:notation}
\textbf{Notation.} We consider the problem of sequential recommendations where the goal is to predict the item a user interacts with (e.g., click or purchase) at time $t$, denoted as $\e_{k_t}$, given her previous interaction history $\E_t = [\e_{k_\tau}]_{\tau=1}^{t-1}$. Here $k_t$ is the index for the item at time $t$, $\e_{k_t}\in\{0,1\}^K$ is a one-hot vector indicating an item, and $K$ is the number of total items. We denote the item embedding (encoding) for $\e_{k_t}$ as $\x_{k_t} = f_e(\e_{k_t})$, where $f_e(\cdot)$ is the encoder as a part of the RNN. Correspondingly we have $\X_t = [\x_{k_\tau}]_{\tau=1}^{t-1}$. Strictly speaking, in an online setting where the model updates at every time step $t$, $\x_k$ also changes over time; in~\secref{sec:rdn} we use $\x_k$ as a shorthand for $\x_{t,k}$ for simplicity. We use $\|\z\|_{\infty} = \max_i |\z^{(i)}|$ to denote the $L_{\infty}$ norm, where the superscript $(i)$ means the $i$-th entry of the vector $\z.$

\textbf{RNN-Based Recommender Systems.}
Given the interaction history $\E_t$, the RNN generates the user embedding at time t as $\tha_t = R([\x_{k_\tau}]_{\tau=1}^{t-1})$, where $\x_{k_\tau} = f_e(\e_{k_\tau})\in\mathbb{R}^d$, and $R(\cdot)$ is the recurrent part of the RNN. Assuming tied weights, the score for each candidate item is then computed as $p_{k,t} = \x_k^\top\tha_t$. As the last step, the recommender system will recommend the items with the highest scores to the user. Note that the subscript $k$ indexes the items, and is equivalent to an `action', usually denoted as $a$, in the context of bandit algorithms.

\subsection{Determinantal Point Processes for Diversity and Exploration}

Determinantal point processes (DPP) consider an item selection problem where each item is represented by a feature vector $\x_t$.
Diversity is achieved by picking a subset of items to cover the maximum volume spanned by the items, measured by the log-determinant of the corresponding kernel matrix,
${\rm ker}(\X_t) = \log\det(\I_K + \X_t \X_t^\top)$, where $\I_K$ is included to prevent singularity. Intuitively, DPP penalizes colinearity, which is an indicator that the topics of one item are already covered by the other topics in the full set.
The log-determinant of a kernel matrix is also a submodular function \citep{friedland2013submodular}, which implies a $(1-1/e)$-optimal guarantees from greedy solutions. 
The greedy algorithm for DPP via the matrix determinant lemma is
\begin{align}
    \argmax\nolimits_{k}\; 
    &\log\det(\I_d + \X_{t}^\top \X_{t} + \x_k\x_k^\top)\\
    &-\log\det(\I_d + \X_{t}^\top \X_{t})
    \nonumber \\
    =
    \argmax\nolimits_{k}\; &\log(
    1+\x_k^\top (\I_d + \X_{t}^\top \X_{t})^{-1}\x_k
    )\\
    = \argmax\nolimits_{k}\; &\sqrt{\x_k^\top (\I_d + \X_{t}^\top \X_{t})^{-1}\x_k}.
    \label{eq:salience}
\end{align}

Interestingly, note that $\sqrt{\x_k^\top (\I_d + \X_{t}^\top \X_{t})^{-1}\x_k}$ has the same form as the confidence interval in LinUCB~\citep{li2010contextual}, a commonly used contextual bandit algorithm to boost exploration and achieve long-term rewards, suggesting a connection between diversity and long-term rewards~\citep{LSB}. Intuitively, this makes sense in recommender systems since encouraging diversity relative to user history 
(\emph{as well as diversity in a slate of recommendations in our experiments}) 
naturally explores user interest previously unknown to the model, leading to much higher long-term rewards, as shown in \secref{sec:experiment}.


\subsection{Recurrent Exploration Networks}\label{sec:rdn_final}

\textbf{Exploration Term.} Based on the intuition above, we can modify the user-item score $p_{k,t} = \x_k^\top\tha_t$ to include a diversity (exploration) term, leading to the new score
\begin{align}
p_{k,t} = \x_k^\top\tha_t + \lambda_d \sqrt{\x_k^\top (\I_d + \X_{t}^\top \X_{t})^{-1}\x_k} \label{eq:rdn_wo_uncertainty},
\end{align}
where the first term is the relevance score and the second term is the exploration score (measuring diversity \emph{between previous and recommended items}). $\tha_t = R(\X_t) = R([\x_{k_\tau}]_{\tau=1}^{t-1})$ is RNN's hidden states at time $t$ representing the user embedding. The hyperparameter $\lambda_d$ aims to balance two terms.


\begin{algorithm}[t]
\caption{Recurrent Exploration Networks (REN)}\label{alg:rdn}
\SetAlgoLined
\textbf{Input:} $\lambda_d$, $\lambda_u$, initialized REN model with the encoder, i.e., $R(\cdot)$ and $f_e(\cdot)$.\\
\For{$t = 1,2,\dots,T$}{
Obtain item embeddings from REN:

\hskip1.5em $\muu_{k_\tau} \gets f_e(\e_{k_\tau})$ for all $\tau\in\{1, 2, \dots, t-1\}$.\\
Obtain the current user embedding from REN: 

\hskip1.5em $\tha_t\gets R(\D_t)$.\\
Compute $\A_t \gets \I_d + \sum_{\tau\in \Psi_t}\muu_{k_\tau}^\top \muu_{k_\tau}$.\\
Obtain candidate items' embeddings from REN:

\hskip1.5em $\muu_{k}\gets f_e(\e_{k})$, where $k\in [K]$.\\
Obtain candidate items' uncertainty estimates $\si_{k}$, where $k\in [K]$.\\
\For{$k\in [K]$}{
Obtain the score for item $k$ at time $t$:

\hskip1.5em $p_{k,t} \gets \muu_k^\top\tha_t + \lambda_d \sqrt{\muu_k^\top \A_t^{-1}\muu_k} + \lambda_u \|\si_k\|_{\infty}.$
} 
Recommend item $k_t \gets \argmax_k p_{t,k}$ and collect user feedbacks. \\
Update the REN model $R(\cdot)$ and $f_e(\cdot)$ using collected user feedbacks. \label{algl:update_rdn}
} 
\end{algorithm}

\textbf{Uncertainty Term for Context Uncertainty.} At first blush, given the user history the system using \eqnref{eq:rdn_wo_uncertainty} will recommend items that are (1) relevant to the user's interest and (2) diverse from the user's previous items. However, this only works when item embeddings $\x_k$ are correctly learned. Unfortunately, the quality of learned item embeddings, in turn, relies heavily on the effectiveness of exploration, leading to a chicken-and-egg problem. To address this problem, one also needs to consider the uncertainty of the learned item embeddings. Assuming the item embedding $\x_k\sim\NM(\muu_k, \Si_k)$, where $\Si_k = \mathbf{diag}(\si_k^2)$, we have the final score for REN:
\begin{align}
p_{k,t} = \muu_k^\top\tha_t + \lambda_d \sqrt{\muu_k^\top (\I_d + \D_{t}^\top \D_{t})^{-1}\muu_k} + \lambda_u \|\si_k\|_{\infty},  \label{eq:rdn_w_uncertainty}
\end{align}
where $\tha_t = R(\D_t) = R([\muu_{k_\tau}]_{\tau=1}^{t-1})$ and $\D_t = [\muu_{k_\tau}]_{\tau=1}^{t-1}$. The term $\si_k$ quantifies the uncertainty for each dimension of $\x_k$, meaning that items whose embeddings REN is uncertain about are more likely to be recommended. Therefore with the third term, REN can naturally balance among relevance, diversity (\emph{relative to user history}), and uncertainty during exploration.

\textbf{Putting It All Together.} \algref{alg:rdn} shows the overview of REN. Note that the difference between REN and traditional RNN-based recommenders is only in the inference stage. During training (Line~\ref{algl:update_rdn} of \algref{alg:rdn}), one can train REN only with the relevance term using models such as GRU4Rec and HRNN. In the experiments, we use uncertainty estimates $\mathbf{diag}(\si_{k}) = 1/\sqrt{n_{k}}\;\I_d$, where $n_{k}$ is item $k$'s total number of impressions (i.e., the number of times item $k$ has been recommended) for all users. The intuition is that: the more frequently item $k$ is recommended, the more frequently its embedding $\x_k$ gets updated, the faster $\si_{k}$ decreases.%
\footnote{There are some caveats in general. $\mathbf{diag}(\si_{k})\propto\I_d$ assumes that all coordinates of $x$ shrink at the same rate. However, REN exploration mechanism associates $n_k$ with the total variance of the features of an item. This may not ensure all feature dimensions to be equally explored. See \citep{pmlr-v97-jun19a} for a different algorithm that analyzes the exploration of the low-rank feature space.}
Our preliminary experiments show that $1/\sqrt{n_{k}}$ does decrease at the rate of $O(1/\sqrt{t})$, meaning that the assumption in \lemref{lem:sigma_to_sqrt} is satisfied. 
From the Bayesian perspective, $1/\sqrt{n_{k}}$ may not accurately reflect the uncertainty of the learned $x_k$, which is a limitation of our model. 
In principle, one can learn $\si_k$ from data using the reparameterization trick~\citep{VAE} with a Gaussian prior on $x_k$ and examine whether $\si_k$ the assumption in \lemref{lem:sigma_to_sqrt}; this would be interesting future work. 

\textbf{Linearity in REN. }
REN only needs a linear bandit model; REN's output $\x_k^\top \tha_t$ is linear w.r.t. $\tha$ and $\x_k$. Note that NeuralUCB~\citep{NeuralUCB} is a powerful nonlinear extension of LinUCB, i.e., its output is nonlinear w.r.t. $\tha$ and $\x_k$. Extending REN's output from $\x_k^\top \tha_t$ to a nonlinear function $f(\x_k, \tha_t)$ as in NeuralUCB is also interesting future work.%
\footnote{In other words, we did not fully explain why $x$ could be shared between non-linear RNN and the uncertainty bounds based on linear models. On the other hand, we did observe promising empirical results, which may encourage interested readers to dive deep into different theoretical analyses.}

\textbf{Beyond RNN.} Note that our methods and theory go beyond RNN-based models and can be naturally extended to any latent factor models including transformers, MLPs, and matrix factorization. The key is the user embedding $\tha_t=R(\X_t)$, which can be instantiated with an RNN, a transformer, or a matrix-factorization model.



\section{Theoretical Analysis}
With REN's connection to contextual bandits, we can prove that with proper $\lambda_d$ and $\lambda_u$, \eqnref{eq:rdn_w_uncertainty} is actually the upper confidence bound that leads to long-term rewards with a rate-optimal regret bound.

\textbf{Reward Uncertainty versus Context Uncertainty.} Note that unlike existing works which primarily consider the randomness from the reward, we take into consideration the uncertainty resulted from the context (content)~\cite{mi2019training,NPN}, i.e., \emph{context uncertainty}. In CDL~\cite{CDL,CRAE}, it is shown that such content information is crucial in DL-based RecSys~\cite{CDL,CRAE}, and so is the associated uncertainty. More specifically, existing works assume deterministic $\x$ and only assume randomness in the reward, i.e., they assume that $r = \x^\top \tha + \epsilon$, and therefore $r$’s randomness is independent of $\x$. The problem with this formulation is that they assume $\x$ is deterministic and therefore the model only has a point estimate of the item embedding $\x$, but does not have uncertainty estimation for such $\x$. We find that such uncertainty estimation is crucial for exploration; if the model is uncertain about $\x$, it can then explore more on the corresponding item. 

To facilitate analysis, we follow common practice~\citep{LinRel,chu2011contextual} to divide the procedure of REN into ``BaseREN" (\algref{alg:baserdn}) and ``SupREN" stages correspondingly. Essentially SupREN introduces $S = \ln T$ levels of elimination (with $s$ as an index) to filter out low-quality items and ensures that the assumption holds (see the Supplement for details of SupREN). 


In this section, we first provide a high probability bound for BaseREN with uncertain embeddings (context), and derive an upper bound for the regret. As mentioned in \secref{sec:notation}, for the online setting where the model updates at every time step $t$, $\x_k$ also changes over time. Therefore in this section we use $\x_{t,k}$, $\muu_{t,k}$, $\Si_{t,k}$, and $\si_{t,k}$ in place of $\x_k$, $\muu_{k}$, $\Si_{k}$, and $\si_{k}$ from \secref{sec:rdn} to be rigorous.


\begin{assup}
Assume there exists an optimal $\tha^*$, with $\|\tha^*\| \leq 1$, and $\x_{t,k}^*$ such that $\E[r_{t,k}] = {\x_{t,k}^*}^\top \tha^*$.  Further assume that there is an effective distribution $\NM(\muu_{t,k}, \Si_{t,k})$ such that ${\x}_{t,k}^* \sim \NM(\muu_{t,k}, \Si_{t,k})$ where $\Si_{t,k} = \textbf{diag}(\si_{t,k}^2)$. Thus, the true underlying context is unavailable, but we are aided with the knowledge that it is generated by a multivariate normal with known parameters\footnote{Here we omit the identifiability issue of ${\x}_{t,k}^*$ and assume that there is a unique ${\x}_{t,k}^*$ for clarity. }. 
\end{assup}



\begin{algorithm}[t]
\caption{BaseREN: Basic REN Inference at Step $t$}\label{alg:baserdn}
\SetAlgoLined
\textbf{Input:} $\alpha$, $\Psi_t\subseteq \{1,2,\dots,t-1\}$.\\
Obtain item embeddings from REN: $\muu_{\tau,k_\tau} \gets f_e(\e_{\tau,k_\tau})$ for all $\tau\in\Psi_t$.\\
Obtain user embedding: $\tha_t\gets R(\D_t)$.\\
$\A_t \gets \I_d + \sum_{\tau\in \Psi_t}\muu_{\tau,k_\tau}^\top \muu_{\tau,k_\tau}$.\\
Obtain candidate items' embeddings: $\muu_{t,k}\gets f_e(\e_{t,k})$, where $k\in [K]$.\\
Obtain candidate items' uncertainty estimates $\si_{t,k}$, where $k\in [K]$.\\
\For{$a\in [K]$}{
$s_{t,k} = \sqrt{\muu_{t,k}^\top \A_{t}^{-1} \muu_{t,k}}$\\
$w_{t,k} \gets (\alpha+1)s_{t,k} + (4\sqrt{d} + 2\sqrt{\ln \frac{TK}{\delta}})\|\si_{t,k}\|_{\infty}$.\label{algl:width}\\
$\widehat{r}_{t,k} \gets \tha_t^\top \muu_{t,k}$.
} 
Recommend item $k\gets \argmax_k \widehat{r}_{t,k} + w_{t,k}$.
\end{algorithm}

\subsection{Upper Confidence Bound for Uncertain Embeddings}\label{sec:confidence_bound}

For simplicity denote the item embedding (context) as $\x_{t,k}$, where $t$ indexes the rounds (time steps) and $k$ indexes the items. We define:
\begingroup\makeatletter\def\f@size{9}\check@mathfonts
\begin{align}
s_{t,k} &= \sqrt{\muu_{t,k}^\top \A_{t}^{-1} \muu_{t,k}} \in \mathbb{R}_+, \;\;
\D_t = [\muu_{\tau,k_{\tau}}]_{\tau \in \Psi_t} \in \mathbb{R}^{|\Psi_t|\times d}, \nonumber\\
\y_t &= [r_{\tau,k_{\tau}}]_{\tau \in \Psi_t} \in \mathbb{R}^{|\Psi_t|\times 1}, \;\;
\A_t = \I_d + \D_t^\top \D_t, \nonumber\\
\b_t &= \D_t^\top \y_t, \;\;\;\;\;\;\;\;\;\;\;\;\;\;\;\;\;\;\;\;\;\;\;\;
\widehat{r}_{t,k} = \muu_{t,k}^\top \hat{\tha_t} = \muu_{t,k}^\top \A_t^{-1} \b_t,\label{eq:to_estimate}
\end{align}
\endgroup
where $\y_t$ is the collected user feedback. \lemref{lem:simple_cb} below shows that with $\lambda_d=1+\alpha=1+\sqrt{\frac{1}{2}\ln \frac{2TK}{\delta}}$ and $\lambda_u=4\sqrt{d} + 2\sqrt{\ln \frac{TK}{\delta}}$, \eqnref{eq:rdn_w_uncertainty} is the upper confidence bound with high probability, meaning that \eqnref{eq:rdn_w_uncertainty} upper bounds the true reward with high probability, which makes it a reasonable score for recommendations. 

\begin{lemma}[\textbf{Confidence Bound}]\label{lem:simple_cb}
With probability at least $1 - 2 \delta / T$, we have for all $k\in[K]$ that
\begin{align*}
|\widehat{r}_{t,k} - {\x_{t,k}^*}^\top \tha^*| \leq &(\alpha + 1)s_{t,k} \\
&+ (4\sqrt{d} + 2\sqrt{\ln \frac{TK}{\delta}})\|\si_{t,k}\|_{\infty},
\end{align*}
where $\|\si_{t,k}\|_{\infty} = \max_i |\si_{t,k}^{(i)}|$ is the $L_{\infty}$ norm.
\end{lemma}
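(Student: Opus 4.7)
The plan is to isolate the ``classical'' LinUCB estimation error from the new contribution due to context uncertainty, bound each piece with a standard concentration inequality, and then union bound over arms and failure events.

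First I would expand the residual using $\x^*_{\tau,k_\tau} = \muu_{\tau,k_\tau} + \ep_\tau$ with $\ep_\tau\sim\NM(\0,\Si_{\tau,k_\tau})$ and $r_{\tau,k_\tau} = \muu_{\tau,k_\tau}^\top\tha^* + \ep_\tau^\top\tha^* + \eta_\tau$, where $\eta_\tau$ is the (bounded/sub-Gaussian) reward noise. Stacking these into $\y_t = \D_t\tha^* + \Xii_t\tha^* + \et_t$ (with $\Xii_t$ the matrix whose rows are $\ep_\tau^\top$, $\tau\in\Psi_t$) and using $\A_t^{-1}\D_t^\top\D_t = \I_d - \A_t^{-1}$ gives
\begin{align*}
\widehat{\tha}_t - \tha^* = -\A_t^{-1}\tha^* + \A_t^{-1}\D_t^\top\et_t + \A_t^{-1}\D_t^\top\Xii_t\tha^*.
\end{align*}
Left-multiplying by $\muu_{t,k}^\top$ and subtracting $\ep_{t,k}^\top\tha^*$ produces a four-term decomposition of $\widehat{r}_{t,k}-(\x^*_{t,k})^\top\tha^*$ that I would bound piece by piece.

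The two ``classical'' pieces account for the $(\alpha+1)s_{t,k}$ contribution. The regularization bias obeys $|\muu_{t,k}^\top\A_t^{-1}\tha^*|\le \|\A_t^{-1/2}\muu_{t,k}\|\cdot\|\A_t^{-1/2}\tha^*\| \le s_{t,k}$, since $\|\tha^*\|\le 1$ and $\A_t\succeq\I_d$. For the reward-noise piece, the whole purpose of wrapping BaseREN inside the SupREN levels (mirroring Auer-style SupLinUCB) is to make $\{\eta_\tau\}_{\tau\in\Psi_t}$ conditionally independent given the selected contexts; Azuma--Hoeffding then applies to $\muu_{t,k}^\top\A_t^{-1}\D_t^\top\et_t = \sum_\tau (\D_t\A_t^{-1}\muu_{t,k})_\tau\,\eta_\tau$, and together with $\|\D_t\A_t^{-1}\muu_{t,k}\|\le s_{t,k}$ yields the $\alpha s_{t,k}$ bound with $\alpha=\sqrt{\tfrac{1}{2}\ln(2TK/\delta)}$.

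The two remaining terms carry the new context uncertainty and feed the $\|\si_{t,k}\|_\infty$ term. The ``current-item'' piece $|\ep_{t,k}^\top\tha^*|$ is bounded by $\|\ep_{t,k}\|$ (using $\|\tha^*\|\le 1$), and $\|\ep_{t,k}\|^2 = \sum_i (\si_{t,k}^{(i)})^2 Z_i^2$ with $Z_i\sim\NM(0,1)$ i.i.d.\ is a weighted $\chi^2_d$ variable; a Laurent--Massart tail inequality together with a union bound over the $K$ candidate arms gives $\|\ep_{t,k}\|\le(2\sqrt{d}+\sqrt{2\ln(TK/\delta)})\|\si_{t,k}\|_\infty$ with failure probability at most $\delta/T$. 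The cross term $\muu_{t,k}^\top\A_t^{-1}\D_t^\top\Xii_t\tha^* = \sum_\tau (\D_t\A_t^{-1}\muu_{t,k})_\tau (\ep_\tau^\top\tha^*)$ is, conditional on the BaseREN filtration, a linear combination of independent mean-zero Gaussians whose standard deviation is at most $s_{t,k}\cdot\max_\tau\|\si_\tau\|_\infty$ by the same norm identity as above, and a Gaussian tail with the same union bound absorbs it into the $\|\si_{t,k}\|_\infty$ coefficient once the constants are loosened into the advertised prefactor $4\sqrt{d}$. A final union bound over the reward-noise and context-noise concentration events yields the stated probability $1-2\delta/T$.

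\textbf{Main obstacle.} The hardest step is this cross term: the rows of $\Xii_t$ are only independent conditional on the contexts chosen in $\Psi_t$, so the SupREN/BaseREN filtration (analogous to SupLinUCB) must be invoked carefully to justify treating the past context noises as independent draws from $\NM(\0,\Si_{\tau,k_\tau})$. Without this conditioning argument, the cross term does not factor cleanly into the two advertised shapes $s_{t,k}$ and $\|\si_{t,k}\|_\infty$, and one would be forced to replace $\|\si_{t,k}\|_\infty$ by $\max_\tau\|\si_{\tau,k_\tau}\|_\infty$, which is strictly weaker than what the lemma claims.
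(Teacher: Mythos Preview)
Your decomposition creates an unnecessary obstacle that the paper's proof sidesteps entirely. By splitting $\y_t - \D_t\tha^*$ into $\et_t + \Xii_t\tha^*$ you end up with a separate cross term $\muu_{t,k}^\top\A_t^{-1}\D_t^\top\Xii_t\tha^*$ whose scale depends on the \emph{past} context noises $\max_\tau\|\si_{\tau,k_\tau}\|_\infty$, not on $\|\si_{t,k}\|_\infty$. You correctly flag this as the main obstacle, and your proposed patch (``absorb it into the $\|\si_{t,k}\|_\infty$ coefficient once the constants are loosened into $4\sqrt{d}$'') does not work: nothing forces the historical $\si_\tau$'s to be dominated by the current one, so your route as written does not prove the lemma as stated.

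The paper avoids this by never splitting $\y_t - \D_t\tha^*$. Using $\A_t^{-1}(\D_t^\top\D_t + \I_d) = \I_d$ and writing $\x_{t,k}^* = \muu_{t,k} + \Si_{t,k}^{1/2}\ep$ for the \emph{current} arm only, the residual collapses to a three-term identity
\[
\widehat{r}_{t,k} - (\x_{t,k}^*)^\top\tha^* \;=\; \muu_{t,k}^\top\A_t^{-1}\D_t^\top(\y_t - \D_t\tha^*) \;-\; (\Si_{t,k}^{1/2}\ep)^\top\tha^* \;-\; \muu_{t,k}^\top\A_t^{-1}\tha^*.
\]
The first term is handled by Azuma--Hoeffding applied to the whole bounded residual $y_\tau - \muu_{\tau,k_\tau}^\top\tha^*$ (rewards bounded, $|\muu^\top\tha^*|\le 1$), giving $\alpha s_{t,k}$ \emph{regardless} of the past $\si_{\tau}$'s: the historical context noise is swallowed for free into the bounded increment and never needs its own tail bound. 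The remaining two terms are exactly your regularization bias ($\le s_{t,k}$) and your ``current-item'' piece $|(\Si_{t,k}^{1/2}\ep)^\top\tha^*|\le\|\Si_{t,k}^{1/2}\ep\|$, controlled by a Gaussian-norm tail bound as you outline. So your analysis of three of the four pieces is fine; the fix is simply not to create the fourth one.
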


The proof is in the Supplement. This upper confidence bound above provides important insight on why \eqnref{eq:rdn_w_uncertainty} is reasonable as a final score to select items in~\algref{alg:rdn} as well as the choice of hyperparameters $\lambda_d$ and $\lambda_u$.

\textbf{RNN to Estimate $\tha_t$.} 
REN uses RNN to approximate $\A_t^{-1} \b_t$ (useful in the proof of~\lemref{lem:simple_cb}) in \eqnref{eq:to_estimate}. Note that a linear RNN with tied weights and a single time step is equivalent to linear regression (LR); therefore RNN is a more general model to estimate $\tha_t$. Compared to LR, RNN-based recommenders can naturally incorporate new user history by incrementally updating the hidden states ($\tha_t$ in REN), without the need to solve a linear equation. 
Interestingly, one can also see RNN's recurrent computation as a simulation (approximation) for solving equations via iterative updating.

\subsection{Regret Bound}
\lemref{lem:simple_cb} above provides an estimate of the reward's upper bound at time $t$. Based on this estimate, one natural next step is to analyze the regret after all $T$ rounds. Formally, we define the regret of the algorithm after $T$ rounds as
\begin{align}\label{eq:regret}
B(T) = \sum_{t=1}^T r_{t,k_t^*} - \sum_{t=1}^T r_{t,k_t},
\end{align}
where $k_t^*$ is the optimal item (action) $k$ at round $t$ that maximizes $\E[r_{t,k}] = {\x_{t,k}^*}^\top\tha^*$, and $k_t$ is the action chose by the algorithm at round $t$. Similar to~\citep{LinRel}, SupREN calls BaseREN as a sub-routine. In this subsection, we derive the regret bound for SupREN with uncertain item embeddings.

\begin{lemma}\label{lem:regret_to_width}
With probability $1-2\delta S$, for any $t\in [T]$ and any $s\in [S]$, we have: (1) $|\widehat{r}_{t,k} - \E[r_{t,k}]|\leq w_{t,k}$ for any $k\in[K]$, (2) $k_t^*\in \hat{A}_s$, and (3) $\E[r_{t,k^*_t}] - \E[r_{t,k}] \leq 2^{(3-s)}$ for any $k\in\hat{A}_s$.
\end{lemma}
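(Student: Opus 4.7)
The plan is to follow the classical SupLinUCB template of Chu, Li, Reyzin, and Schapire (2011), adapted to handle the additional $\|\si_{t,k}\|_\infty$ term that arises from context uncertainty. Recall that SupREN partitions the history into $S=\ln T$ disjoint index sets $\Psi_{t,1},\dots,\Psi_{t,S}$, and at level $s$ it feeds $\Psi_{t,s}$ into BaseREN; the filtering rules are chosen so that, conditional on the contexts, the rewards used at level $s$ behave as if they were independent samples, which is exactly the hypothesis needed to apply Lemma~\ref{lem:simple_cb}.

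\textbf{Part (1): Per-level confidence bound.} I would first invoke Lemma~\ref{lem:simple_cb} for each level $s\in[S]$ separately. Each application fails with probability at most $2\delta/T$; applying a union bound over the $S$ levels (and, implicitly, absorbing the $T$ rounds into the $1/T$ factor already present in the Lemma~\ref{lem:simple_cb} bound) yields the desired failure probability of $2\delta S$. This gives
\[
|\widehat{r}_{t,k} - {\x_{t,k}^*}^\top\tha^*| \;\leq\; (\alpha+1)s_{t,k} + \Bigl(4\sqrt{d}+2\sqrt{\ln\tfrac{TK}{\delta}}\Bigr)\|\si_{t,k}\|_\infty \;=\; w_{t,k},
\]
uniformly in $k\in[K]$, $t\in[T]$, and $s\in[S]$. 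Since $\E[r_{t,k}] = {\x_{t,k}^*}^\top\tha^*$ by the Assumption, this establishes (1).

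\textbf{Part (2): Optimal action survives every elimination.} I would proceed by induction on $s$. The base case $s=1$ is immediate because $\hat{A}_1=[K]$. For the inductive step, SupREN's elimination rule at level $s$ removes an action $k$ only if $\widehat{r}_{t,k}+w_{t,k}$ is more than $2^{1-s}$ (or a comparable threshold, as in SupLinUCB) below $\max_{k'\in\hat{A}_s}(\widehat{r}_{t,k'}+w_{t,k'})$. Applying part (1) twice — once to $k_t^*$ and once to the maximizing $k'$ — shows that the upper confidence bound at $k_t^*$ cannot be below that of any currently surviving action by more than the prescribed threshold, so $k_t^*\in\hat{A}_{s+1}$.

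\textbf{Part (3): Shrinking suboptimality gap.} Once (1) and (2) hold, the gap bound follows from the standard halving argument. For any $k\in\hat{A}_s$, the elimination rule guarantees $\widehat{r}_{t,k}+w_{t,k}\geq \widehat{r}_{t,k_t^*}+w_{t,k_t^*}-2^{1-s}$ (using $k_t^*\in\hat{A}_{s-1}$ from part (2)). Combining with the two-sided bound from part (1) applied at $k$ and $k_t^*$, and using the fact that surviving to level $s$ means $w_{t,k}\leq 2^{-s}$ (by the SupREN width-thresholding rule), one gets $\E[r_{t,k_t^*}]-\E[r_{t,k}]\leq 2^{1-s}+2w_{t,k}\leq 2^{1-s}+2\cdot 2^{-s}\leq 2^{3-s}$, which is the stated inequality.

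The main obstacle is Part (1): to legitimately reuse Lemma~\ref{lem:simple_cb} at each level, I must argue that the construction of SupREN renders the rewards in $\Psi_{t,s}$ conditionally independent given the contexts selected at level $s$, so that the martingale/concentration arguments behind Lemma~\ref{lem:simple_cb} go through. The cleanest way is to follow Chu et al.\ verbatim — the filtering rule "action $k$ is added to $\Psi_{t,s}$ only if $w_{t,k}$ passed a threshold at level $s-1$" ensures that the sigma-algebra governing inclusion does not depend on the reward noise at level $s$, which is exactly what decouples the samples. Parts (2) and (3) are essentially bookkeeping once (1) is in hand, but I would need to be careful that the extra additive $\lambda_u\|\si_{t,k}\|_\infty$ term inside $w_{t,k}$ does not disrupt the geometric-halving structure; fortunately it is absorbed into $w_{t,k}$ wholesale, so the argument carries over unchanged.
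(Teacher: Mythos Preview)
Your proposal is correct and takes essentially the same approach as the paper: the paper's own proof simply defers to Lemma~15 of Auer (2002) with the modification that $w_{t,k}$ now includes the extra $\|\si_{t,k}\|_\infty$ term from \lemref{lem:simple_cb}, and your write-up is precisely a fleshed-out version of that SupLinRel/SupLinUCB argument. One small caveat: in Part~(3) your level indexing is off by one --- membership in $\hat{A}_s$ is determined by the elimination rule applied at level $s-1$ (threshold $2^{2-s}$, width bound $w^{(s-1)}_{t,k}\le 2^{1-s}$), which yields $2^{2-s}+2\cdot 2^{1-s}=2^{3-s}$ exactly --- but the structural argument is sound.
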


\begin{lemma}\label{lem:s_to_sqrt}
In BaseREN, we have: 
$
(1+\alpha)\sum_{t\in\Psi_{T+1}} s_{t,k_t} \leq 5 \cdot (1+\alpha^2) \sqrt{d|\Psi_{T+1}|}.
$
\end{lemma}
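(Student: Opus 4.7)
I would follow the elliptical potential argument that is standard in LinUCB-style analyses, adapted to the BaseREN notation. First I would establish the per-round bound $s_{t,k_t}^2 \leq 1$ for every $t \in \Psi_{T+1}$: since $\A_t = \I_d + \sum_{\tau \in \Psi_t}\muu_{\tau,k_\tau}\muu_{\tau,k_\tau}^\top \succeq \I_d$, we have $\A_t^{-1}\preceq \I_d$, and combined with the standard normalization $\|\muu_{t,k_t}\|\leq 1$ implicit in \lemref{lem:simple_cb}, this gives $s_{t,k_t}^2 = \muu_{t,k_t}^\top \A_t^{-1}\muu_{t,k_t}\leq 1$. This boundedness will be crucial to convert a logarithmic determinant bound into a quadratic sum bound.

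\textbf{Elliptical potential step.} Ordering $\Psi_{T+1}$ chronologically and observing that, for each index $t\in\Psi_{T+1}$ that BaseREN actually uses, $\A_{t+1} = \A_t + \muu_{t,k_t}\muu_{t,k_t}^\top$, the matrix determinant lemma yields $\det(\A_{t+1}) = \det(\A_t)(1+s_{t,k_t}^2)$. Telescoping gives $\prod_{t\in\Psi_{T+1}}(1+s_{t,k_t}^2) = \det(\A_{\mathrm{final}})$, where $\A_{\mathrm{final}}$ is the matrix $\A_t$ at the last round. The trace--determinant inequality (AM--GM on eigenvalues) together with $\tr(\A_{\mathrm{final}})\leq d + |\Psi_{T+1}|$ produces $\det(\A_{\mathrm{final}})\leq (1+|\Psi_{T+1}|/d)^d$. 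Taking logarithms and applying $\ln(1+u)\geq u\ln 2$ for $u\in[0,1]$ (valid since $s_{t,k_t}^2\leq 1$) then yields the sum-of-squares bound $\sum_{t\in\Psi_{T+1}} s_{t,k_t}^2 \leq (d/\ln 2)\ln(1+|\Psi_{T+1}|/d)$.

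\textbf{Cauchy--Schwarz and rebalancing.} Cauchy--Schwarz now converts this to a bound on the sum: $\sum_{t\in\Psi_{T+1}} s_{t,k_t}\leq \sqrt{|\Psi_{T+1}|}\sqrt{\sum s_{t,k_t}^2}\leq \sqrt{(d/\ln 2)\,|\Psi_{T+1}|\ln(1+|\Psi_{T+1}|/d)}$. Multiplying by $(1+\alpha)$, the final step is to absorb the $\sqrt{\ln(1+|\Psi_{T+1}|/d)}$ factor into $(1+\alpha^2)$. Since $\alpha = \sqrt{\tfrac{1}{2}\ln(2TK/\delta)}$ from \lemref{lem:simple_cb} and $|\Psi_{T+1}|\leq T$, the logarithm inside the square root lives on the same scale as $\alpha^2$, so one bounds $\ln(1+|\Psi_{T+1}|/d)$ by a small constant multiple of $\alpha^2$. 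After that substitution, $(1+\alpha)\sqrt{\ln(\cdot)}\leq C(1+\alpha^2)$ and one tunes $C$ so that the resulting coefficient in front of $\sqrt{d|\Psi_{T+1}|}$ collapses to the claimed $5$.

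\textbf{Main obstacle.} The hardest part is the constant chasing at the end: pinning down $(1+\alpha)\sqrt{\ln(1+|\Psi_{T+1}|/d)/\ln 2}\leq 5(1+\alpha^2)$ requires working through the regime $|\Psi_{T+1}|\leq T$ together with the specific form of $\alpha$ to confirm that $5$ actually suffices (and not, say, $6$ or $10$). A secondary technical issue is the normalization $\|\muu_{t,k}\|\leq 1$, which is taken for granted in the statement; if the embeddings produced by $f_e(\cdot)$ are not literally unit-norm, the regularization by $\I_d$ inside $\A_t$ must be combined with an explicit rescaling in order to preserve the $s_{t,k_t}^2\leq 1$ step used above. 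Apart from these numerical and normalization details, the proof is a textbook combination of the matrix determinant lemma, the trace--determinant inequality, and Cauchy--Schwarz.
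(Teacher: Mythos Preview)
Your approach is correct and is precisely the elliptical-potential argument underlying the lemmas the paper cites; the paper itself does not write out a proof but simply says ``this is a direct result of Lemma~3 and Lemma~6 in \cite{chu2011contextual} as well as Lemma~16 in \cite{LinRel}.'' Your sketch is therefore a self-contained reconstruction of exactly those cited results. The final constant does work: from $\ln(1+|\Psi_{T+1}|/d)\leq \ln(2T)\leq \ln(2TK/\delta)=2\alpha^2$ one gets $(1+\alpha)\sqrt{\ln(1+|\Psi_{T+1}|/d)/\ln 2}\leq (1+\alpha)\alpha\sqrt{2/\ln 2}\leq \tfrac{3}{2}\sqrt{2/\ln 2}\,(1+\alpha^2)<5(1+\alpha^2)$, using $\alpha\leq \tfrac{1}{2}(1+\alpha^2)$, so the ``constant chasing'' you flag as the main obstacle is not actually tight.
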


\begin{lemma}\label{lem:sigma_to_sqrt}
Assuming $\|\si_{1,k}\|_{\infty} = 1$ and $\|\si_{t,k}\|_{\infty} \leq \frac{1}{\sqrt{t}}$ for any $k$ and $t$, then for any $k$, we have the upper bound: 
$
\sum_{t\in\Psi_{T+1}} \|\si_{t,k}\|_{\infty} \leq \sqrt{|\Psi_{T+1}|}.
$
\end{lemma}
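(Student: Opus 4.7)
The plan is to reduce the sum of $L_\infty$ norms to a scalar series $\sum 1/\sqrt{t}$ using the per-round hypothesis, and then finish by a standard integral (or Cauchy--Schwarz) estimate. First I would apply the hypothesis $\|\si_{t,k}\|_{\infty} \leq 1/\sqrt{t}$ term by term to get
\begin{align*}
\sum_{t\in\Psi_{T+1}} \|\si_{t,k}\|_{\infty} \;\leq\; \sum_{t\in\Psi_{T+1}} \frac{1}{\sqrt{t}}.
\end{align*}
Since $t\mapsto 1/\sqrt{t}$ is strictly decreasing on $\mathbb{Z}_{\geq 1}$, the sum on the right is maximized, over all subsets of cardinality $n := |\Psi_{T+1}|$, by the initial block $\{1,2,\dots,n\}$. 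Equivalently, if one writes $\Psi_{T+1} = \{t_1 < t_2 < \cdots < t_n\}$, then $t_i \geq i$ forces $1/\sqrt{t_i} \leq 1/\sqrt{i}$, so
\begin{align*}
\sum_{t\in\Psi_{T+1}} \frac{1}{\sqrt{t}} \;\leq\; \sum_{i=1}^{n} \frac{1}{\sqrt{i}}.
\end{align*}
Here the hypothesis $\|\si_{1,k}\|_{\infty}=1$ is consistent with the $i=1$ term and needs no separate treatment.

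The final step is the classical integral comparison: because $1/\sqrt{x}$ is decreasing, bounding the first term by $1$ and the rest by $\int_{1}^{n} x^{-1/2}\,dx = 2\sqrt{n}-2$ gives $\sum_{i=1}^{n} i^{-1/2} \leq 2\sqrt{n}-1$, which yields the desired $O(\sqrt{|\Psi_{T+1}|})$ bound. A Cauchy--Schwarz alternative, $\sum i^{-1/2} \leq \sqrt{n}\sqrt{\sum 1/i}$, gives only $O(\sqrt{n\log n})$, so the integral route is the right one for this rate.

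The main obstacle is not analytical but bookkeeping: the integral bound naturally delivers $2\sqrt{|\Psi_{T+1}|}-1$ rather than the $\sqrt{|\Psi_{T+1}|}$ written in the statement. I would either (i) carry the factor $2$ transparently and note that this constant is absorbed into the overall constant of the final regret bound of Theorem~1, or (ii) check whether the monotone assumption on $\|\si_{t,k}\|_{\infty}$ can be tightened using the structure of the impression counts $n_{t,k}$ (under the intended reading $\si_{t,k}\propto 1/\sqrt{n_{t,k}}$, summing only over rounds where item $k$ is selected makes $n_{t_i,k}=i$ and the argument above applies cleanly with the same rate). Either way, the rate-optimal $\sqrt{|\Psi_{T+1}|}$ scaling is exactly what is needed when this lemma is combined with Lemma~\ref{lem:s_to_sqrt} to control the cumulative width $\sum w_{t,k_t}$ in the regret analysis.
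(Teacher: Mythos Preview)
Your approach is essentially the paper's: apply the per-round bound, reduce to $\sum_{i=1}^{n}1/\sqrt{i}$ with $n=|\Psi_{T+1}|$, and finish by integral comparison. The paper is terser (it jumps straight to $\sum_{t=1}^{|\Psi_{T+1}|}1/\sqrt{t}\leq\int_0^{|\Psi_{T+1}|}t^{-1/2}\,dt$ without spelling out the $t_i\geq i$ step), but the logic is the same.

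On the constant: you are right to flag it. The paper obtains exactly $\sqrt{|\Psi_{T+1}|}$ only because it evaluates the antiderivative of $t^{-1/2}$ as $\sqrt{t}$ rather than $2\sqrt{t}$; with the correct antiderivative the bound is $2\sqrt{|\Psi_{T+1}|}$, matching your $2\sqrt{n}-1$. So your ``bookkeeping obstacle'' is not a gap in your argument but an arithmetic slip in the paper's proof, and your option~(i)---carry the factor $2$ and absorb it into the constants of \lemref{lem:psi_to_sqrt} and \thmref{thm:regret_bound}---is the correct resolution.
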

Essentially \lemref{lem:regret_to_width} links the regret $B(T)$ to the width of the confidence bound $w_{t,k}$ (Line~\ref{algl:width} of \algref{alg:baserdn} or the last two terms of \eqnref{eq:rdn_w_uncertainty}). \lemref{lem:s_to_sqrt} and \lemref{lem:sigma_to_sqrt} then connect $w_{t,k}$ to $\sqrt{|\Psi_{T+1}|}\leq \sqrt{T}$, which is sublinear in $T$; this is the key to achieve a sublinear regret bound. Note that $\hat{A}_s$ is defined inside Algorithm 2 (SupREN) of the Supplement.

Interestingly, \lemref{lem:sigma_to_sqrt} states that the uncertainty only needs to decrease at the rate $\frac{1}{\sqrt{t}}$, which is consistent with our choice of $\mathbf{diag}(\si_{k}) = 1/\sqrt{n_{k}}\;\I_d$ in~\secref{sec:rdn_final}, where $n_{k}$ is item $k$'s total number of impressions for all users. As the last step, \lemref{lem:psi_to_sqrt} and \thmref{thm:regret_bound} below build on all lemmas above to derive the final sublinear regret bound. 

\begin{lemma}\label{lem:psi_to_sqrt}
For all $s\in [S]$,
\begingroup\makeatletter\def\f@size{9}\check@mathfonts
\def\maketag@@@#1{\hbox{\m@th\large\normalfont#1}}%
\begin{align*}
|\Psi_{T+1}^{(s)}| \leq 2^s \cdot (5(1+\alpha^2)\sqrt{d|\Psi_{T+1}^{(s)}|}
+ 4\sqrt{dT} + 2\sqrt{T\ln \frac{TK}{\delta}}).
\end{align*}
\endgroup
\end{lemma}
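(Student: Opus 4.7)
The plan is to mirror the SupLinUCB regret decomposition of Chu et al.\ (2011), with the new $\|\si_{t,k}\|_\infty$ term handled by Lemma~\ref{lem:sigma_to_sqrt}. The first step is to recall the stage-$s$ dispatch rule of SupREN: a round $t$ is placed in $\Psi_{T+1}^{(s)}$ only when, at stage $s$, no surviving arm can yet be certified at accuracy scale $2^{-s}$, so the algorithm selects some arm whose confidence width strictly satisfies $w_{t,k_t} > 2^{-s}$. Summing this strict lower bound over $t\in\Psi_{T+1}^{(s)}$ immediately yields
\begin{align*}
2^{-s}\,|\Psi_{T+1}^{(s)}| \;<\; \sum_{t\in\Psi_{T+1}^{(s)}} w_{t,k_t}.
\end{align*}

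The second step is to split $w_{t,k_t}$ using Line~\ref{algl:width} of BaseREN into the ``$s_{t,k}$ part'' and the ``$\si_{t,k}$ part,'' and bound each using the lemmas already in hand. For the $s_{t,k}$ part I would invoke Lemma~\ref{lem:s_to_sqrt} applied to the index set $\Psi_{T+1}^{(s)}$ (the lemma is stated for a generic $\Psi_{T+1}$ and uses the matrix $\A_t$ built only from the items of that set, which is precisely what level $s$ of SupREN maintains) to obtain $(1+\alpha)\sum_{t\in\Psi_{T+1}^{(s)}} s_{t,k_t} \leq 5(1+\alpha^2)\sqrt{d\,|\Psi_{T+1}^{(s)}|}$. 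For the $\si_{t,k}$ part I would apply Lemma~\ref{lem:sigma_to_sqrt} together with the trivial inequality $|\Psi_{T+1}^{(s)}|\leq T$, producing
\begin{align*}
\bigl(4\sqrt{d}+2\sqrt{\ln\tfrac{TK}{\delta}}\bigr)\sqrt{T} \;=\; 4\sqrt{dT} + 2\sqrt{T\ln\tfrac{TK}{\delta}}.
\end{align*}

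Combining these two bounds with the strict inequality above and multiplying through by $2^s$ yields the claimed inequality. I expect the main obstacle to be formalizing the first step, namely pinning down the dispatch rule of SupREN tightly enough to guarantee the threshold $w_{t,k_t}>2^{-s}$ on every $t\in\Psi_{T+1}^{(s)}$; this requires verifying that SupREN forwards round $t$ to stage $s{+}1$ precisely when \emph{all} surviving widths at stage $s$ are at most $2^{-s}$, and conversely that inclusion in $\Psi_{T+1}^{(s)}$ certifies a width strictly exceeding $2^{-s}$. A secondary, purely bookkeeping, point is to check that the matrix $\A_t$ appearing inside Lemma~\ref{lem:s_to_sqrt} at level $s$ accumulates only items from $\Psi_t^{(s)}$, so that the lemma applies verbatim with the full history replaced by its level-$s$ subsequence; this is an internal invariant of SupREN rather than a new computation.
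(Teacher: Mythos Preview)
Your proposal is correct and follows the paper's proof essentially verbatim: the paper also lower-bounds $\sum_{t\in\Psi_{T+1}^{(s)}} w_{t,k_t}^{(s)}$ by $2^{-s}|\Psi_{T+1}^{(s)}|$ via the ``else'' branch of SupREN (your first step), upper-bounds it by splitting into the $s_{t,k}$ and $\|\si_{t,k}\|_\infty$ parts and applying \lemref{lem:s_to_sqrt} and \lemref{lem:sigma_to_sqrt} respectively together with $|\Psi_{T+1}^{(s)}|\le T$ (your second step), and then combines. The two bookkeeping concerns you flag are exactly the ones that the SupREN construction is designed to handle, and the paper simply cites the analogous invariant from Auer (2002) and Chu et al.\ (2011) rather than re-deriving it.
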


\begin{theorem}\label{thm:regret_bound}
If SupREN is run with $\alpha=\sqrt{\frac{1}{2}\ln \frac{2TK}{\delta}}$, with probability at least $1 - \delta$, the regret of the algorithm is
\begin{align*}
B(T) &\leq 2\sqrt{T} + 92\cdot(1+\ln \frac{2TK(2\ln T+2)}{\delta})^{\frac{3}{2}}\sqrt{Td}\\
&=O(\sqrt{Td\ln^3(\frac{KT\ln(T)}{\delta}})),
\end{align*}
\end{theorem}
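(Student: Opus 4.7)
The plan is to follow the standard SupLinUCB-style regret decomposition (as in Chu et al.\ on LinUCB), but with the width function $w_{t,k}$ replaced by our two-term width $(\alpha+1)s_{t,k} + (4\sqrt d + 2\sqrt{\ln(TK/\delta)})\|\si_{t,k}\|_\infty$ coming from \lemref{lem:simple_cb}. The whole argument is conditioned on the good event of \lemref{lem:regret_to_width}, which holds with probability at least $1-2\delta S$ with $S=\lceil\ln T\rceil$; at the end we absorb the factor $S$ into $\delta$ so the final failure probability is $\delta$.

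First, decompose the rounds into three groups: (i) rounds handled by the ``default'' branch of SupREN where the remaining candidate set has width at most $1/\sqrt{T}$, (ii) the set $\Psi_{T+1}^{(s)}$ of rounds that fall in level $s$, for $s=1,\dots,S$, and (iii) rounds where the algorithm selects $k_t^\ast$ directly. By \lemref{lem:regret_to_width}(2), $k_t^\ast$ is always in the surviving set, so on the good event group (iii) contributes zero regret, and the per-step regret on group (i) is at most $2/\sqrt{T}$ (twice the width), summing to at most $2\sqrt{T}$ over all $T$ rounds. This accounts for the leading $2\sqrt{T}$ term.

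Next, bound the contribution of each level $s$. By \lemref{lem:regret_to_width}(3), each round $t\in\Psi_{T+1}^{(s)}$ suffers instantaneous regret at most $2^{3-s}$, so the level-$s$ regret is at most $2^{3-s}|\Psi_{T+1}^{(s)}|$. To bound $|\Psi_{T+1}^{(s)}|$ I would chain the width bound from \algref{alg:baserdn}:
\begin{align*}
2^{-s}|\Psi_{T+1}^{(s)}| &\le \sum_{t\in\Psi_{T+1}^{(s)}} w_{t,k_t} \\
&\le (\alpha+1)\!\!\sum_{t\in\Psi_{T+1}^{(s)}}\!\! s_{t,k_t} + (4\sqrt d + 2\sqrt{\ln(TK/\delta)})\!\!\sum_{t\in\Psi_{T+1}^{(s)}}\!\! \|\si_{t,k_t}\|_\infty,
\end{align*}
using that rounds at level $s$ have width at least $2^{-s}$ by construction of SupREN. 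Applying \lemref{lem:s_to_sqrt} to the first sum and \lemref{lem:sigma_to_sqrt} to the second yields exactly the quadratic-type inequality of \lemref{lem:psi_to_sqrt}. Solving that inequality for $|\Psi_{T+1}^{(s)}|$ via the elementary fact that $x\le a\sqrt{x}+b$ implies $x\le 2(a^2+b)$, and multiplying by $2^{3-s}$, gives the level-$s$ regret bound in terms of $\alpha$, $d$, and $T$, independent of $s$ up to constants (this is exactly why SupLinUCB-type splits work).

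Finally, sum the per-level bound over $s=1,\dots,S=\lceil\ln T\rceil$, plug in $\alpha=\sqrt{\tfrac{1}{2}\ln(2TK/\delta)}$ so that $1+\alpha^2 = 1 + \tfrac{1}{2}\ln(2TK/\delta)$, absorb $4\sqrt d + 2\sqrt{\ln(TK/\delta)}$ into the same logarithmic factor, and combine the $\sqrt{Td}$ rates. The $(1+\alpha^2)$ factor appears squared (from the quadratic solution) while $S$ contributes an extra logarithmic factor, producing the $\bigl(1+\ln\tfrac{2TK(2\ln T+2)}{\delta}\bigr)^{3/2}\sqrt{Td}$ scaling; adjusting $\delta\to\delta/(2S)$ in \lemref{lem:regret_to_width} gives the claimed overall confidence $1-\delta$ and, up to the constant $92$, the stated bound. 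I expect the main obstacle to be the careful bookkeeping of the constants and the logarithmic factors: matching the exponent $3/2$ requires recognizing that one $\sqrt{\ln(\cdot)}$ comes from $\alpha$, one from $\|\si_{t,k}\|_\infty$'s coefficient, and one from the union bound over the $S$ levels, after which the summation over $s$ is a geometric series that contributes only a constant because the per-level bound is $s$-independent. The $\sqrt{Td}$ rate is then immediate from \lemref{lem:s_to_sqrt} and \lemref{lem:sigma_to_sqrt}.
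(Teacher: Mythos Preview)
Your overall decomposition---split rounds into the ``small width'' set $\Psi_0$ and the level sets $\Psi_{T+1}^{(s)}$, bound the per-round regret at level $s$ by $2^{3-s}$ via \lemref{lem:regret_to_width}(3), and control $|\Psi_{T+1}^{(s)}|$ using \lemref{lem:psi_to_sqrt}---matches the paper. But the step where you ``solve the quadratic'' is a genuine error. \lemref{lem:psi_to_sqrt} reads $|\Psi_{T+1}^{(s)}|\le 2^s\bigl(a\sqrt{|\Psi_{T+1}^{(s)}|}+b\bigr)$ with $a=5(1+\alpha^2)\sqrt d$ and $b=4\sqrt{dT}+2\sqrt{T\ln(TK/\delta)}$. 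Treating this as $x\le A\sqrt{x}+B$ with $A=2^sa$, $B=2^sb$ and applying $x\le 2(A^2+B)$ gives $|\Psi_{T+1}^{(s)}|\le 2(4^s a^2+2^s b)$, hence $2^{3-s}|\Psi_{T+1}^{(s)}|\le 16\cdot 2^s a^2+16b$. The first term is \emph{not} independent of $s$; summing over $s=1,\dots,S$ with $S=\ln T$ produces $\sum_s 2^s\approx 2T$, i.e.\ a term of order $Td(1+\alpha^2)^2$, which is linear in $T$. So your claimed $s$-independence fails, and the route as written does not give a $\sqrt{T}$ regret.

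The paper avoids this by \emph{not} solving for $|\Psi_{T+1}^{(s)}|$. Instead it multiplies \lemref{lem:psi_to_sqrt} by $8\cdot 2^{-s}$ directly, cancelling the $2^s$:
\[
8\cdot 2^{-s}|\Psi_{T+1}^{(s)}|\le 8\Bigl(5(1+\alpha^2)\sqrt{d|\Psi_{T+1}^{(s)}|}+4\sqrt{dT}+2\sqrt{T\ln\tfrac{TK}{\delta}}\Bigr),
\]
then sums over $s$ and uses $\sum_{s}\sqrt{|\Psi_{T+1}^{(s)}|}\le\sqrt{S\sum_s|\Psi_{T+1}^{(s)}|}\le\sqrt{ST}$ (Cauchy--Schwarz plus disjointness of the $\Psi_{T+1}^{(s)}$). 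This is what yields the $(1+\alpha^2)\sqrt{STd}$ term and, after substituting $\alpha^2=\tfrac12\ln(2TK/\delta)$ and $S=\ln T$, the $(1+\ln(\cdot))^{3/2}\sqrt{Td}$ scaling. Note also that the sum over $S$ levels contributes a factor of $S$ (or $\sqrt S$), not ``only a constant'' as you wrote.

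A second omission: the quantity bounded above is the pseudo-regret $\sum_t(\E[r_{t,k_t^*}]-\E[r_{t,k_t}])$, not $B(T)=\sum_t(r_{t,k_t^*}-r_{t,k_t})$. The paper applies Azuma--Hoeffding (\lemref{lem:azuma}) to pass from one to the other, costing an extra $\delta$ in the failure probability; that is why the final substitution is $\delta\to\delta/(2S+2)$ rather than $\delta/(2S)$.
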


The full proofs of all lemmas and the theorem are in the Supplement. \thmref{thm:regret_bound} shows that even with the uncertainty in the item embeddings (i.e., context uncertainty), our proposed REN can achieve the same rate-optimal sublinear regret bound.

\section{Experiments}\label{sec:experiment}
In this section, we evaluate our proposed REN on both synthetic and real-world datasets. 

\subsection{Experiment Setup and Compared Methods}\label{sec:setup}
\textbf{Joint Learning and Exploration Procedure in Temporal Data.}
To effectively verify REN's capability to boost long-term rewards, we adopt an online experiment setting where data is divided into different time intervals $[T_0, T_1), [T_1, T_2), \dots, [T_{M-1}, T_M]$. RNN (including REN and its baselines) is then trained and evaluated in a rolling manner: (1) RNN is trained using data in $[T_0, T_1)$; (2) RNN is evaluated using data in $[T_1, T_2)$ and collects feedbacks (rewards) for its recommendations; (3) RNN uses newly collected feedbacks from $[T_1, T_2)$ to finetune the model; (4) Repeat the previous two steps using data from the next time interval. Note that different from traditional offline and one-step evaluation, corresponding to only Step (1) and (2), our setting performs joint learning and exploration in temporal data, and therefore is more realistic and closer to production systems.

\textbf{Long-Term Rewards.} Since the goal is to evaluate long-term rewards, we are mostly interested in the rewards during the last (few) time intervals. Conventional RNN-based recommenders do not perform exploration and are therefore much easier to saturate at a relatively low reward. In contrast, REN with its effective exploration can achieve nearly optimal rewards in the end.


\textbf{Compared Methods.}
We compare REN variants with state-of-the-art RNN-based recommenders including \textbf{GRU4Rec}~\citep{GRU4Rec}, \textbf{TCN}~\citep{TCN}, \textbf{HRNN}~\citep{HRNN}. Since REN can use any RNN-based recommenders as a base model, we evaluate three REN variants in the experiments: \textbf{REN-G}, \textbf{REN-T}, and \textbf{REN-H}, which use GRU4Rec, TCN, and HRNN as base models, respectively. Additionally we also evaluate \textbf{REN-1,2}, an REN variant without the third term of \eqnref{eq:rdn_w_uncertainty}, and \textbf{REN-1,3}, one without the second term of \eqnref{eq:rdn_w_uncertainty}, as an ablation study. Both REN-1,2 and  REN-1,3 use GRU4Rec as the base model. As references we also include \textbf{Oracle}, which always achieves optimal rewards, and \textbf{Random}, which randomly recommends one item from the full set. For REN variants we choose $\lambda_d$ from $\{0.001, 0.005, 0.01, 0.05, 0.1\}$ and set $\lambda_u = \sqrt{10}\lambda_d$. Other hyperparameters in the RNN base models are kept the same for fair comparison (see the Supplement for more details on neural network architectures, hyperparameters, and their sensitivity analysis).

\textbf{Connection to Reinforcement Learning (RL) and Bandits.} REN-1,2 (in \figref{fig:ablation}) can be seen as a simplified version of `randomized least-squares value iteration' (an RL approach proposed in~\cite{osband2016generalization}) or an adapted version of contextual bandits, while REN-1,3 (in \figref{fig:ablation}) is an advanced version of $\epsilon$-greedy exploration in RL. Note that REN is orthogonal to RL~\cite{shi2019virtual} and bandit methods. 

\begin{figure*}[!tb]
\begin{center}
\hskip-0.1cm
\subfigure{
\includegraphics[width=0.325\textwidth]{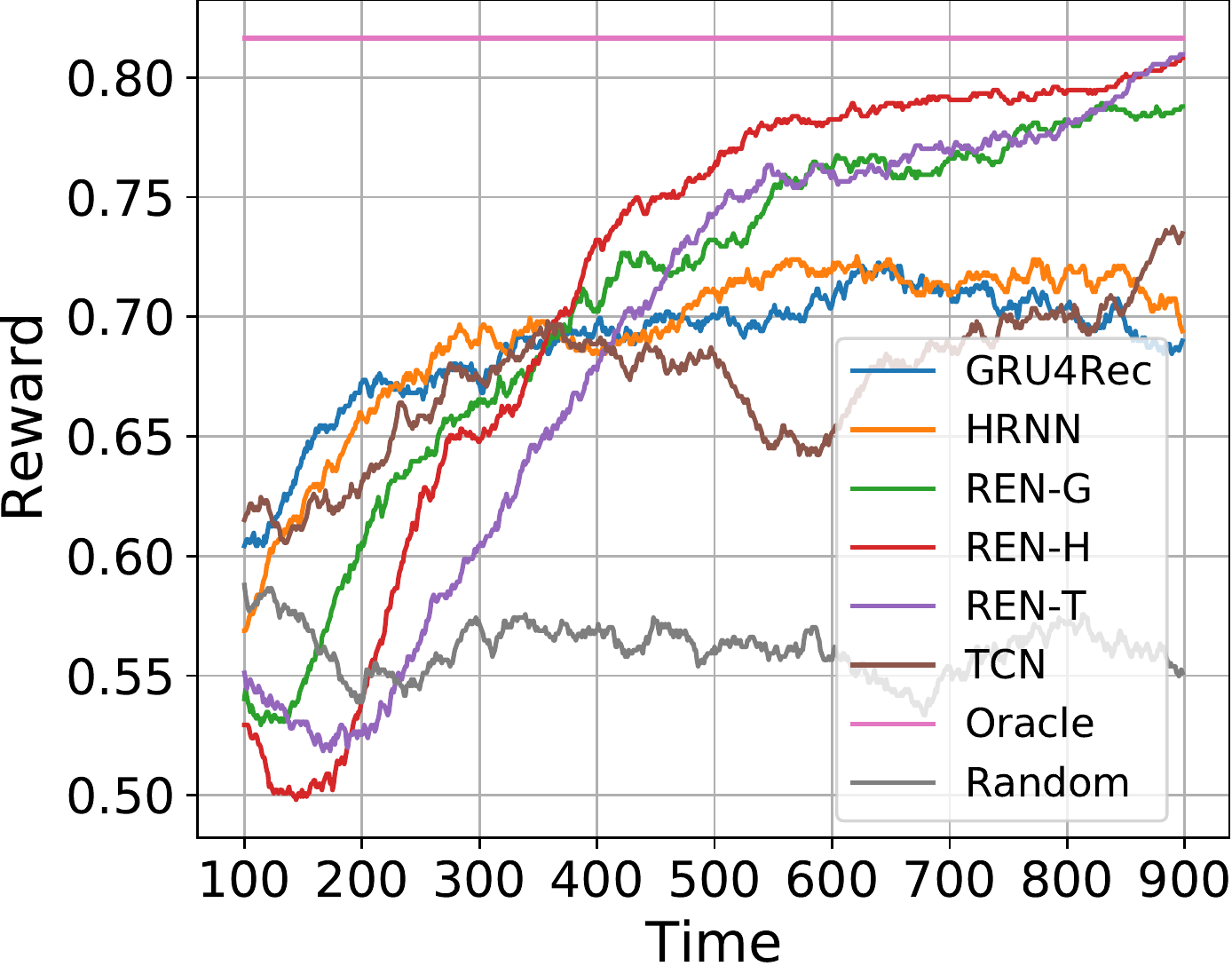}}
\subfigure{
\includegraphics[width=0.325\textwidth]{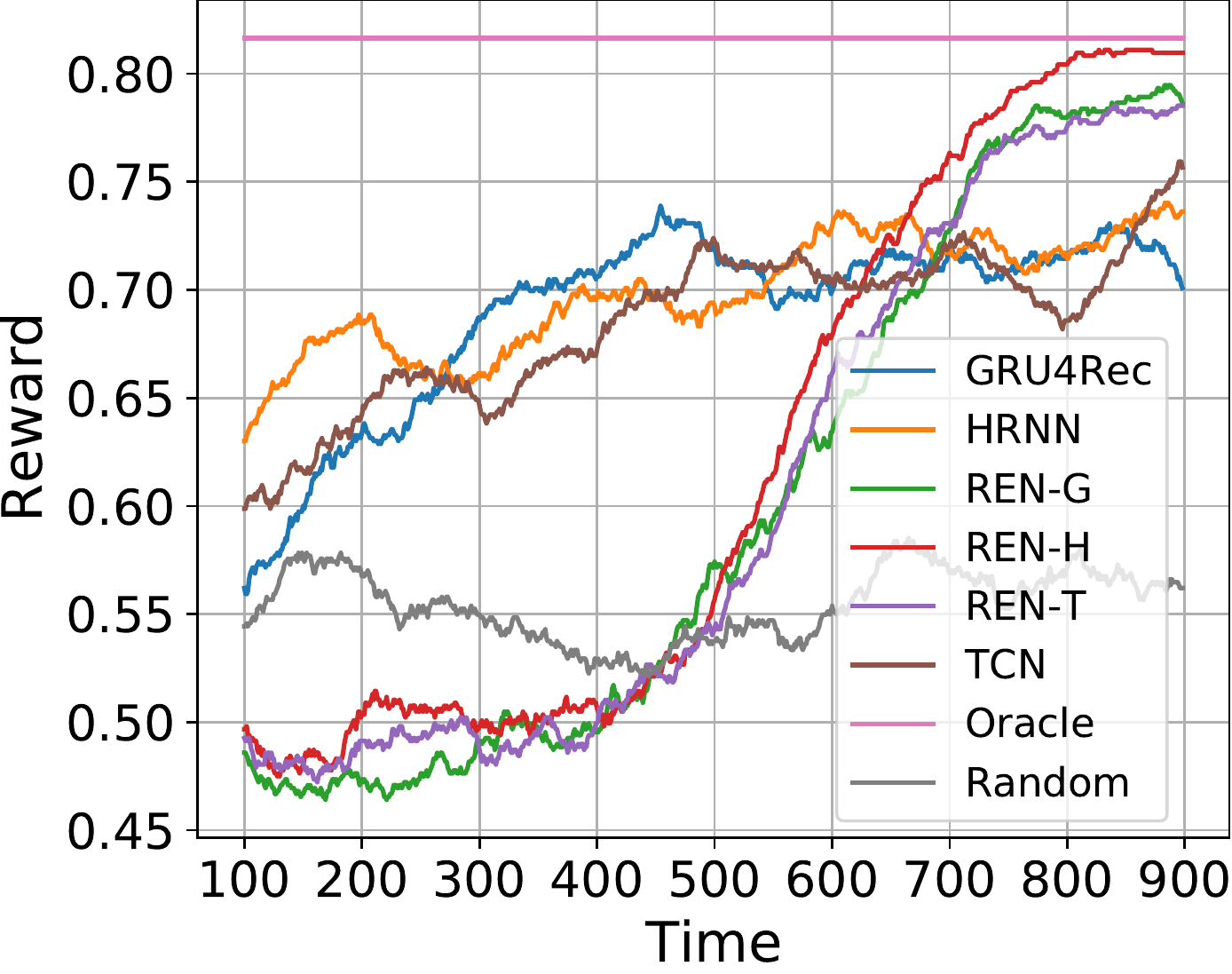}}
\subfigure{
\includegraphics[width=0.325\textwidth]{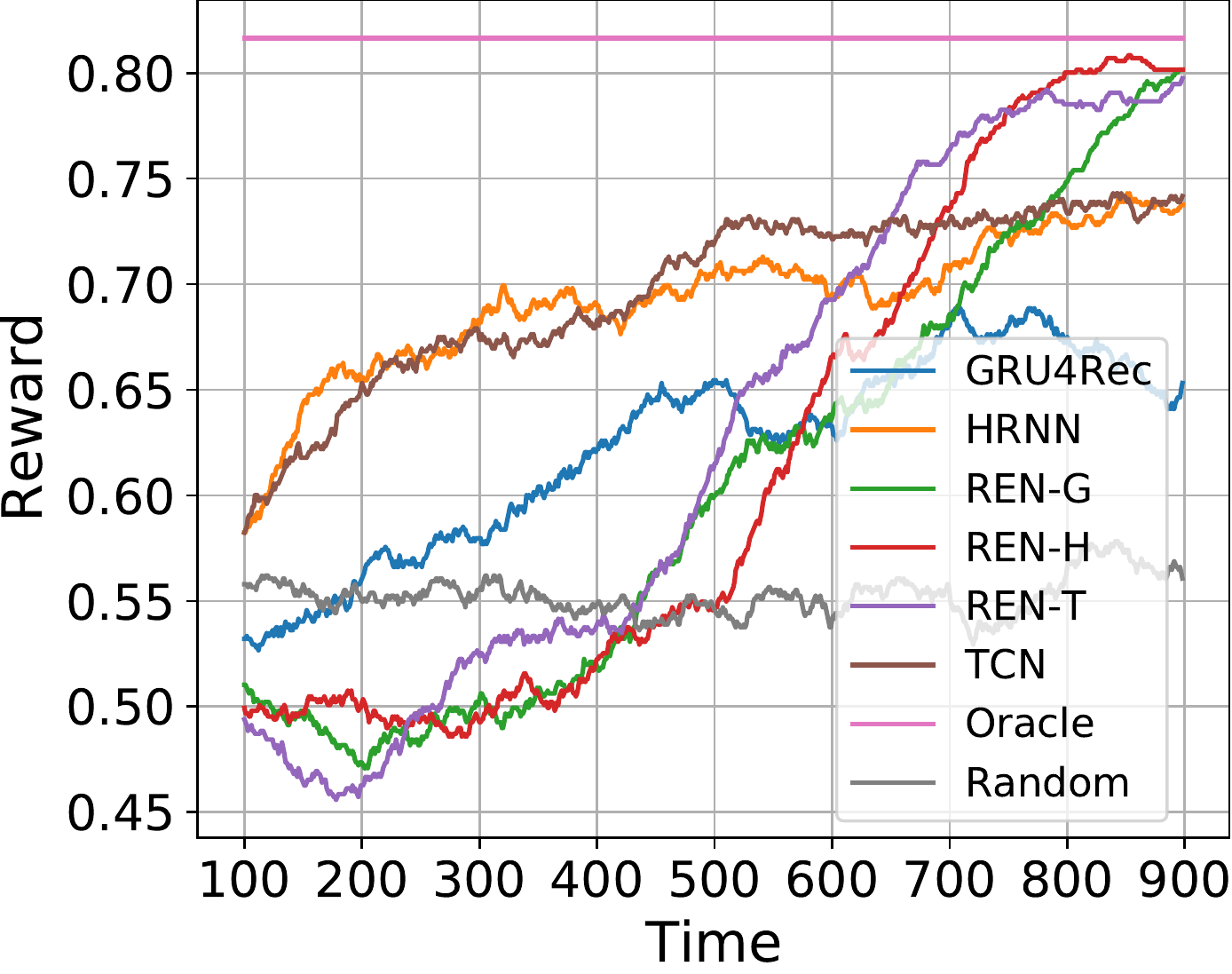}}
\end{center}
\caption{Results for different methods in \emph{SYN-S} (\textbf{left} with 28 items), \emph{SYN-M} (\textbf{middle} with 280 items), and \emph{SYN-L} (\textbf{right} with 1400 items). One time step represents one interaction step, where in each interaction step the model recommends 3 items to the user and the user interacts with one of them. In all cases, REN models with diversity-based exploration lead to final convergence, whereas models without exploration get stuck at local optima. 
}
\label{fig:result_syn}
\end{figure*}


\subsection{Simulated Experiments}\label{sec:simulated}
\textbf{Datasets.}
Following the setting described in \secref{sec:setup}, we start with three synthetic datasets, namely \emph{SYN-S}, \emph{SYN-M}, and \emph{SYN-L}, which allow complete control on the simulated environments. We assume $8$-dimensional latent vectors, which are unknown to the models, for each user and item, and use the inner product between user and item latent vectors as the reward. Specifically, for each latent user vector $\tha^*$, we randomly choose $3$ entries to set to $1/\sqrt{3}$ and set the rest to $0$, keeping $\|\tha^*\|_2 = 1$. We generate $C^8_2 = 28$ unique item latent vectors. Each item latent vector $\x_k^*$ has $2$ entries set to $1/\sqrt{2}$ and the other $6$ entries set to $0$ so that $\|\x_k^*\|_2=1$.

We assume $15$ users in our datasets. \emph{SYN-S} contains exactly $28$ items, while \emph{SYN-M} repeats each unique item latent vector for $10$ times, yielding $280$ items in total. Similarly, \emph{SYN-L} repeats for $50$ times, therefore yielding $1400$ items in total. The purpose of allowing different items to have identical latent vectors is to investigate REN's capability to explore in the compact latent space rather than the large item space. All users have a history length of $60$.

\textbf{Simulated Environments.}
With the generated latent vectors, the simulated environment runs as follows: At each time step $t$, the environment randomly chooses one user and feed the user's interaction history $\X_t$ (or $\D_t$) into the RNN recommender. The recommender then recommends the top $4$ items to the user. The user will select the item with the highest ground-truth reward ${\tha^*}^\top\x_k^*$, after which the recommender will collect the selected item with the reward and finetune the model. 

\textbf{Results.}
\figref{fig:result_syn} shows the rewards over time for different methods. Results are averaged over $3$ runs and we plot the rolling average with a window size of $100$ to prevent clutter. As expected, conventional RNN-based recommenders saturate at around the $500$-th time step, while all REN variants successfully achieve nearly optimal rewards in the end. One interesting observation is that REN variants obtain rewards lower than the ``Random" baseline at the beginning, meaning that they are sacrificing immediate rewards to perform exploration in exchange for long-term rewards.


\begin{figure}[!tb]
\centering
\includegraphics[width=0.35\textwidth]{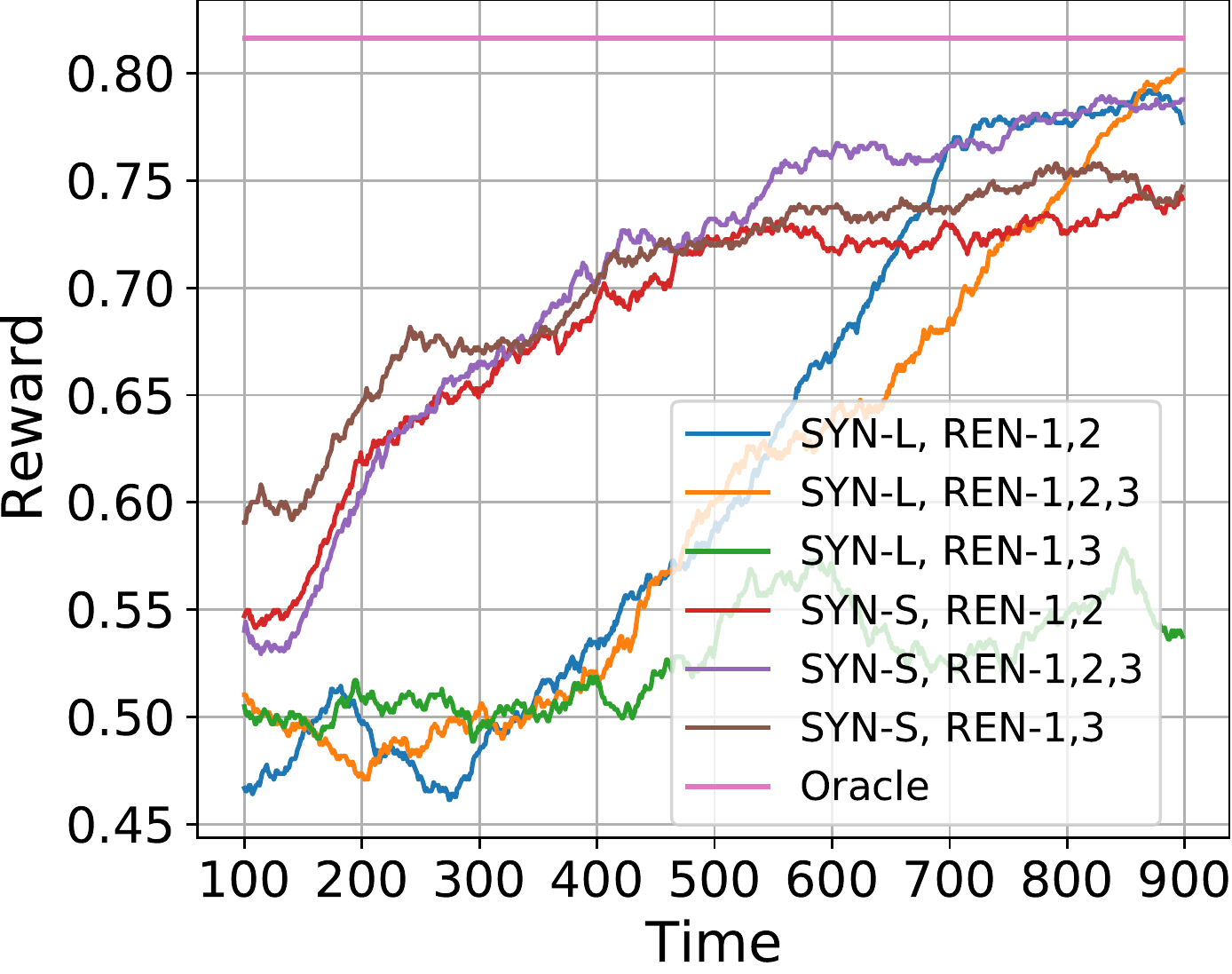}
\caption{\label{fig:ablation}Ablation study on different terms of REN. `REN-1,2,3' refers to the full `REN-G' model.}
\end{figure}

\textbf{Ablation Study.}
\figref{fig:ablation} shows the rewards over time for REN-G (i.e., REN-1,2,3), REN-1,2, and REN-1,3 in \emph{SYN-S} and \emph{SYN-L}. We observe that REN-1,2, with only the relevance (first) and diversity (second) terms of \eqnref{eq:rdn_w_uncertainty}, saturates prematurely in \emph{SYN-S}. On the other hand, the reward of REN-1,3, with only the relevance (first) and uncertainty (third) term, barely increases over time in \emph{SYN-L}. In contrast, the full REN-G works in both \emph{SYN-S} and \emph{SYN-L}. This is because without the uncertainty term, REN-1,2 fails to effectively choose items with uncertain embeddings to explore. REN-1,3 ignores the diversity in the latent space and tends to explore items that have rarely been recommended; such exploration directly in the item space only works when the item number is small, e.g., in \emph{SYN-S}.


\begin{figure*}[!tb]
\begin{center}
\vskip -0.1in
\hskip-0.1cm
\subfigure{
\includegraphics[height=3.8cm]{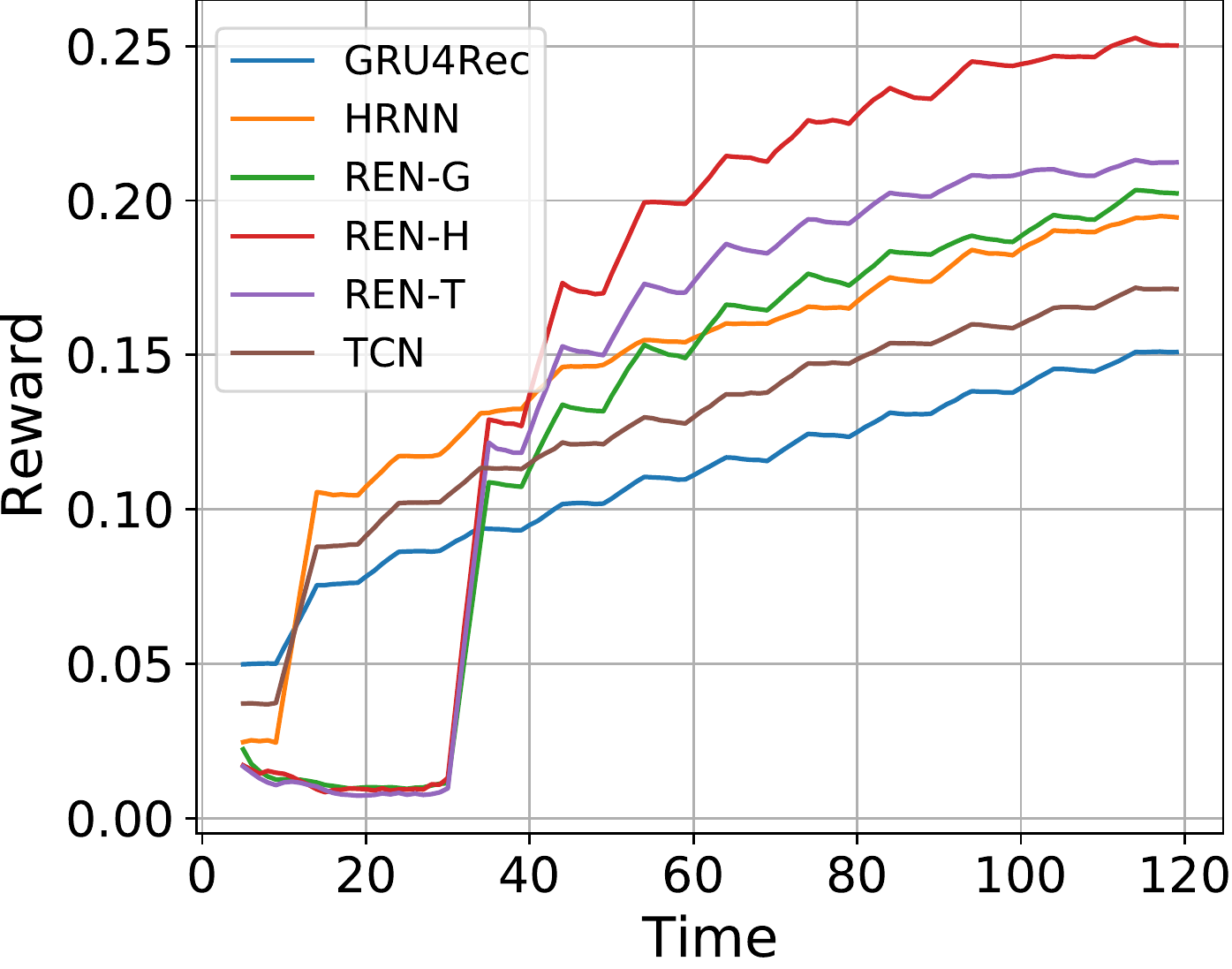}}
\subfigure{
\includegraphics[height=3.8cm]{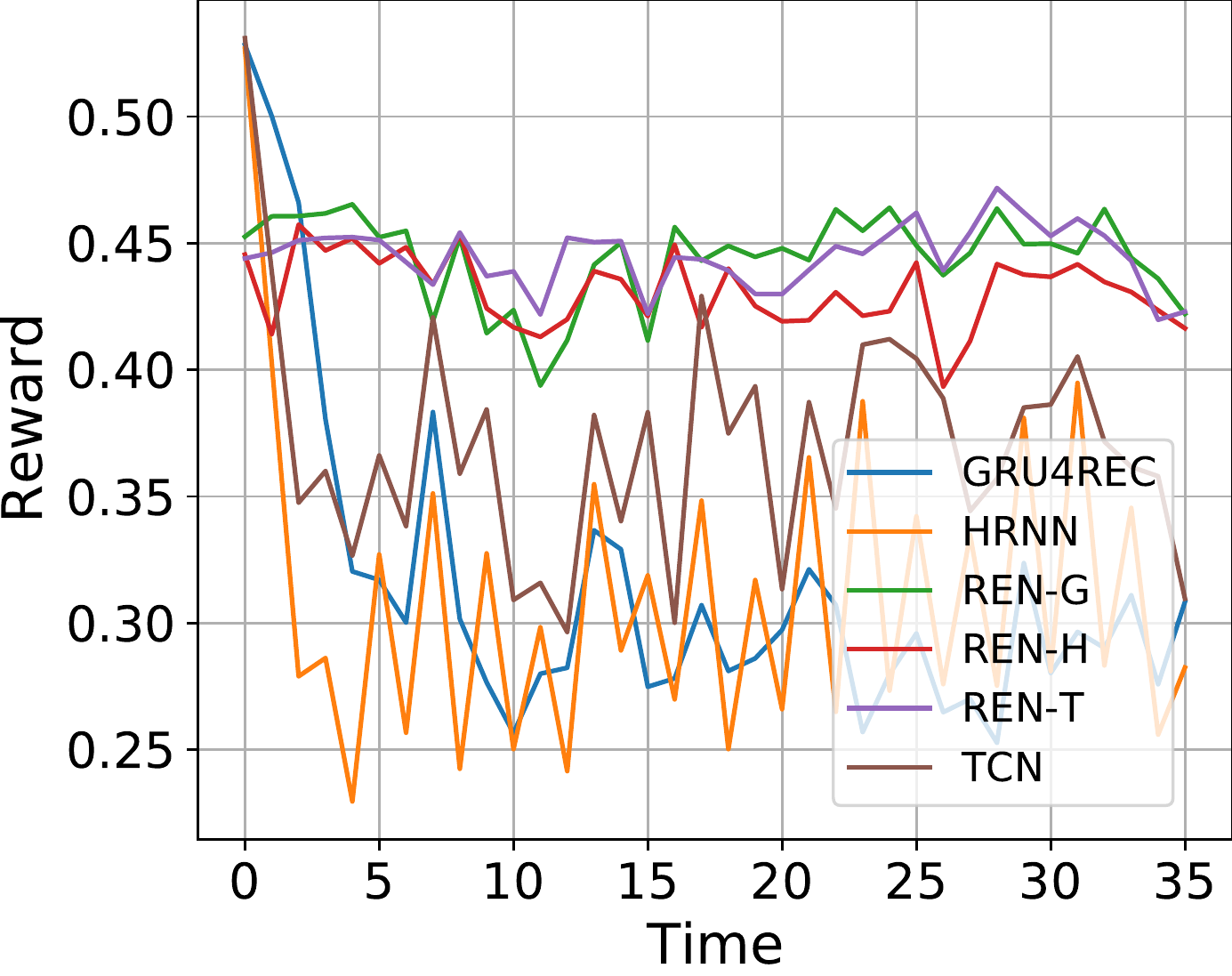}}
\subfigure{
\includegraphics[height=3.8cm]{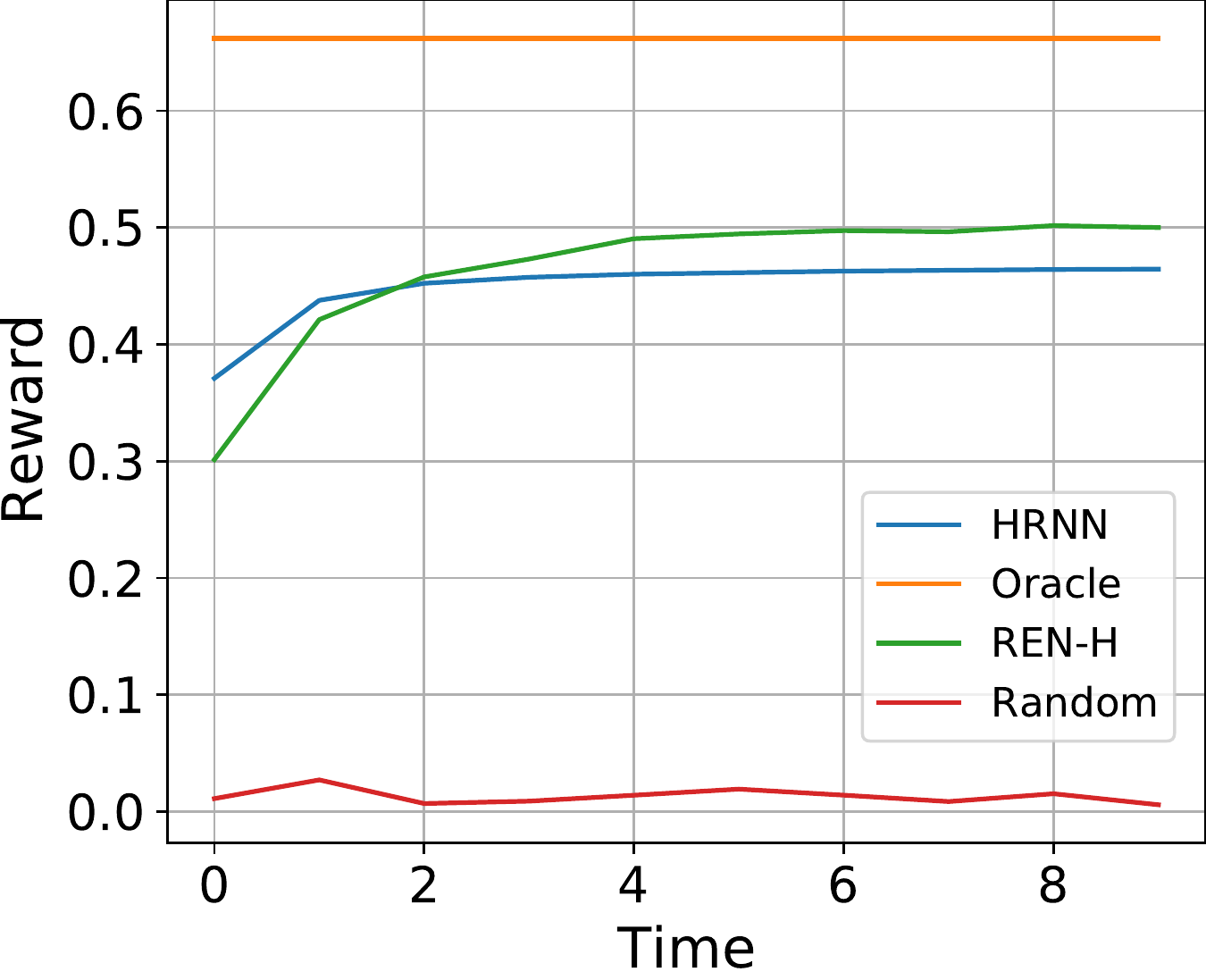}}
\end{center}
\vskip -0.2in
\caption{Rewards (precision@$10$, MRR, and recall@$100$, respectively) over time on \emph{MovieLens-1M} (\textbf{left}), \emph{Trivago} (\textbf{middle}), and \emph{Netflix} (\textbf{right}). One time step represents $10$ recommendations to a user, one hour of data, and $100$ recommendations to a user for \emph{MovieLens-1M}, \emph{Trivago}, and \emph{Netflix}, respectively.
}
\label{fig:real}
\vskip -0.2cm
\end{figure*}

\begin{figure*}[!tb]
\begin{center}
\hskip-0.1cm
\subfigure{
\includegraphics[height=3.8cm]{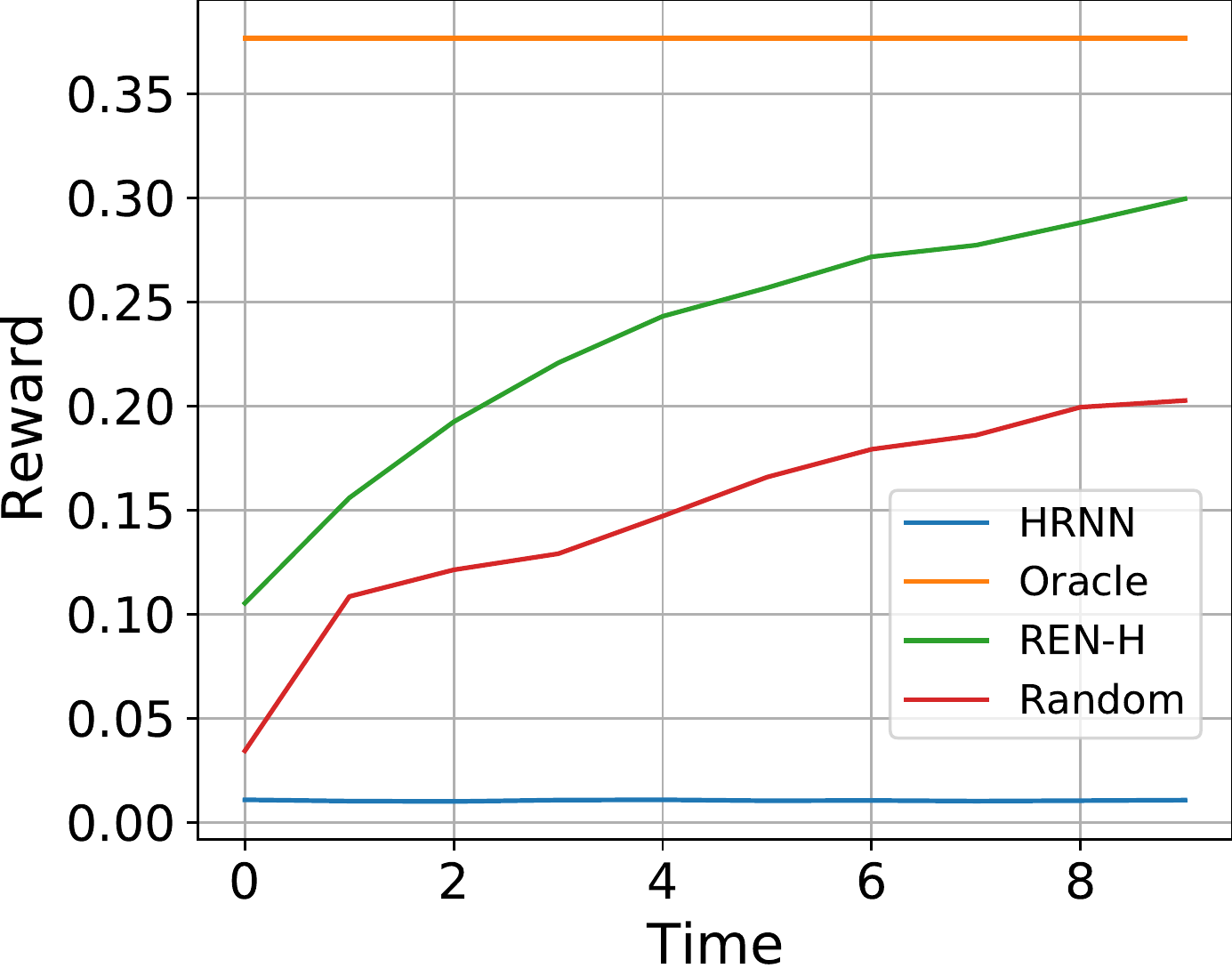}}
\subfigure{
\includegraphics[height=3.8cm]{figures/netflix-in-recall}}
\subfigure{
\includegraphics[height=3.8cm]{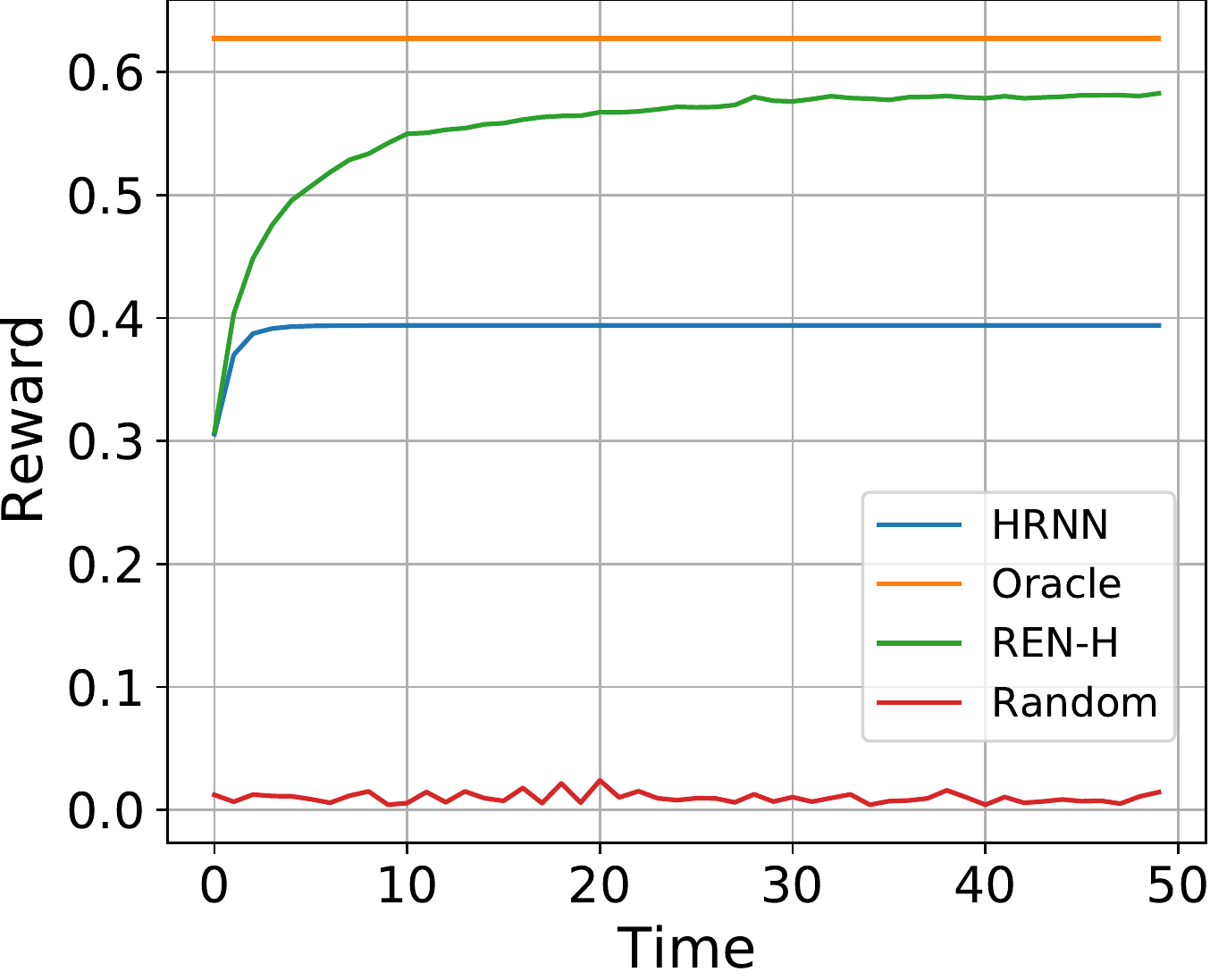}}
\end{center}
\vskip -0.2in
\caption{\label{fig:netflix_recall}Rewards over time on \emph{Netflix}. One time step represents $100$ recommendations to a user.
}
\vskip -0.6cm
\end{figure*}

\textbf{Hyperparameters.} 
For the base models GRU4Rec, TCN, and HRNN, we use identical network architectures and hyperparemeters whenever possible following~\cite{GRU4Rec,TCN,HRNN}. Each RNN consists of an encoding layer, a core RNN layer, and a decoding layer. We set the number of hidden neurons to $32$ for all models including REN variants.  
Fig. 1 in the Supplement shows the REN-G's performance for different $\lambda_d$ (note that we fix $\lambda_u = \sqrt{10}\lambda_d$) in \emph{SYN-S}, \emph{SYN-M}, and \emph{SYN-L}. We can observe stable REN performance across a wide range of $\lambda_d$. As expected, REN-G's performance is closer to GRU4Rec when $\lambda_d$ is small.


\subsection{Real-World Experiments}\label{sec:real}

\textbf{\emph{MovieLens-1M}.}
We use \emph{MovieLens-1M}~\citep{harper2016movielens} containing $3{,}900$ movies and $6{,}040$ users with an experiment setting similar to~\secref{sec:simulated}. Each user has $120$ interactions, and we follow the joint learning and exploration procedure described in~\secref{sec:setup} to evaluate all methods (more details in the Supplement). All models recommend $10$ items at each round for a chosen user, and the precision@$10$ is used as the reward. \figref{fig:real}(left) shows the rewards over time averaged over all $6{,}040$ users. As expected, REN variants with different base models are able to achieve higher long-term rewards compared to their non-REN counterparts.


\textbf{\emph{Trivago}.}
We also evaluate the proposed methods on \emph{Trivago}\footnote{More details are available at \url{https://recsys.trivago.cloud/challenge/dataset/}.}, a hotel recommendation dataset with $730{,}803$ users, $926{,}457$ items, and $910{,}683$ interactions. We use a subset with $57{,}778$ users, $387{,}348$ items, and $108{,}713$ interactions and slice the data into $M=48$ one-hour time intervals for the online experiment (see the Supplement for details on data pre-processing). Different from \emph{MovieLens-1M}, \emph{Triavago} has impression data available: at each time step, besides which item is clicked by the user, we also know which $25$ items are being shown to the user. Such information makes the online evaluation more realistic, as we now know the ground-truth feedback if an arbitrary subset of the $25$ items are presented to the user. At each time step of the online experiments, all methods will choose $10$ items from the $25$ items to recommend the current user and collect the feedback for these $10$ items as data for finetuning. 
We pretrain the model using \emph{all $25$ items} from the first $13$ hours before starting the online evaluation. \figref{fig:real}(middle) shows the mean reciprocal rank (MRR), the official metric used in the RecSys Challenge, for different methods. As expected, the baseline RNN (e.g., GRU4Rec) suffers from a drastic drop in rewards because agents are allowed to recommend \emph{only $10$ items}, and they choose to focus only on relevance. This will inevitably ignores valuable items and harms the accuracy. In contrast, REN variants (e.g., REN-G) can effectively balance relevance and exploration for these $10$ recommended items at each time step, achieving higher long-term rewards. Interestingly, we also observe that REN variants have better stability in performance compared to RNN baselines.


\textbf{\emph{Netflix}.} 
Finally, we also use \emph{Netflix}\footnote{\url{https://www.kaggle.com/netflix-inc/netflix-prize-data}} to evaluate how REN performs in the slate recommendation setting and without finetuning in each time step, i.e., skipping Step (3) in~\secref{sec:setup}. We pretrain REN on data from half the users and evaluate on the other half. At each time step, REN generates 100 mutually diversified items for one slate following \eqnref{eq:rdn_w_uncertainty}, with $p_{k,t}$ updated after every item generation. \figref{fig:real}(right) shows the recall@100 as the reward for different methods, demonstrating REN's promising exploration ability when no finetuning is allowed (more results in the Supplement). 
\figref{fig:netflix_recall}(left) shows similar trends with recall@100 as the reward on the same holdout item set.
This shows that the collected set contributes to building better user embedding models. \figref{fig:netflix_recall}(middle) shows that the additional exploration power comes without significant harms to the user's immediate rewards on the exploration set, where the recommendations are served.
In fact, we used a relatively large exploration coefficient, $\lambda_d=\lambda_u=0.005$, which starts to affect recommendation results on the sixth position.
By additional hyperparameter tuning, we realized that to achieve better rewards on the exploration set, we may choose smaller $\lambda_d=0.0007$ and $\lambda_u=0.0008$.
\figref{fig:netflix_recall}(right) shows significantly higher recalls close to the oracle performance, where all of the users' histories are known and used as inputs to predict the top-100 personalized recommendations.\footnote{The gap between oracle and 100\% recall lies in the model approximation errors.}
Note that, for fair presentation of the tuned results, we switched the exploration set and the holdout set and used a different test user group, consisting of 1543 users.
We believe that the tuned results are generalizable with new users and items, but we also realize that the \emph{Netflix} dataset still has a significant popularity bias and therefore we recommend using larger exploration coefficients with real online systems.
The inference cost is 175 milliseconds to pick top-100 items from 8000 evaluation items. It includes 100 sequential linear function solutions with 50 embedding dimensions, which is further improvable by selecting multiple items at a time in slate generation.

\section{Conclusion}
We propose the REN framework to balance relevance and exploration during recommendation. Our theoretical analysis and empirical results demonstrate the importance of considering uncertainty in the learned representations for effective exploration and improvement on long-term rewards. We provide an upper confidence bound on the estimated rewards along with its corresponding regret bound and show that REN can achieve the same rate-optimal sublinear regret even in the presence of uncertain representations. Future work could investigate the possibility of learned uncertainty in representations, extension to Thompson sampling, nonlinearity of the reward w.r.t. $\tha_t$, and applications beyond recommender systems, e.g., robotics and conversational agents.

\section{Acknowledgement}
The authors thank Tim Januschowski, Alex Smola, the AWS AI's Personalize Team and ML Forecast Team, as well as the reviewers/SPC/AC for the constructive comments to improve the paper. We are also grateful for the RecoBandits package provided by Bharathan Blaji, Saurabh Gupta, and Jing Wang to facilitate the simulated environments. HW is partially supported by NSF Grant IIS-2127918.

\bibliography{example_paper}

\appendix
\onecolumn

\section{Proofs in the Main Paper}
In this section, we provide the detailed proofs for the lemmas and main theorem in the paper. 


\begin{assup}
Assume there exists an optimal $\tha^*$, with $\|\tha^*\| \leq 1$ and $\x_{t,k}^*$ such that $\E[r_{t,k}] = {\x_{t,k}^*}^\top \tha^*$.  Further assume that there is an effective distribution $\NM(\muu_{t,k}, \Si_{t,k})$ such that ${\x}_{t,k}^* \sim \NM(\muu_{t,k}, \Si_{t,k})$ where $\Si_{t,k} = \textbf{diag}(\si_{t,k}^2)$. Thus, the true underlying context is unavailable, but we are aided with the knowledge that it is generated with a multivariate normal whose parameters are known. 
\end{assup}



\begin{algorithm}\label{alg:baserdn}
\caption{BaseREN: Basic REN Inference at Step $t$}
\SetAlgoLined
\textbf{Input:} $\alpha$, $\Psi_t\subseteq \{1,2,\dots,t-1\}$.\\
Obtain item embeddings from REN: $\muu_{\tau,k_\tau} \gets f_e(\e_{\tau,k_\tau})$ for all $\tau\in\Psi_t$.\\
Obtain the current user embedding from REN: $\tha_t\gets R(\D_t)$.\\
$\A_t \gets \I_d + \sum_{\tau\in \Psi_t}\muu_{\tau,k_\tau}^\top \muu_{\tau,k_\tau}$.\\
Obtain candidate items' embeddings from REN: $\muu_{t,k}\gets f_e(\e_{t,k})$, where $k\in [K]$.\\
Obtain candidate items' uncertainty estimates $\si_{t,k}$, where $k\in [K]$.\\
\For{$a\in [K]$}{
$w_{t,k} \gets (\alpha+1)s_{t,k} + (4\sqrt{d} + 2\sqrt{\ln \frac{TK}{\delta}})\|\si_{t,k}\|_{\infty}$.\label{algl:width}\\
$\widehat{r}_{t,k} \gets \tha_t^\top \muu_{t,k}$.
} 
Recommend item $k\gets \argmax_k \widehat{r}_{t,k} + w_{t,k}$.
\end{algorithm}
\begin{algorithm}\label{alg:suprdn}
\caption{SupREN}
\SetAlgoLined
\textbf{Input:} Number of rounds $T$.\\
$S \gets \ln T$ and $\Psi_t^{(s)}\gets \emptyset$ for all $s\in[T]$.\\
\For{$t=1,2,\dots,T$}{
$s\gets 1$ and $\hat{A}_1\gets [K]$.\\
\Repeat{an item $k_t$ is found}{
    Use BaseREN with $\Psi_t^{(s)}$ to calculate the width, $w_{t,k}^{(s)}$, and the upper confidence bound, $\hat{r}^{(s)}_{t,k} + w_{t,k}^{(s)}$, for all $k\in\hat{A}_s$.\\

    \uIf{$w_{t,k}^{(s)}\leq \frac{1}{\sqrt{T}}$ for all $k\in \hat{A}_s$}{
        Choose  $k_t=\argmax_{k\in\hat{A}_s}(\widehat{r}_{t,k}^{(s)}+w_{t,k}^{(s)})$ and update: $\Psi_{t+1}^{(s')}\gets \Psi_t^{(s')}$ for all $s'\in [S]$. \label{algl:step2c}
    }
    \uElseIf{$w_{t,k}^{(s)} \leq 2^{-s}$ for all $k\in\hat{A}_s$}{
        $\hat{A}_{s+1} \gets \{ k\in\hat{A}_s | \widehat{r}_{t,k}^{(s)} + w_{t,k}^{(s)} \geq \max_{k'\in \hat{A}_s}(\hat{r}_{t,k'}^{(s)}+w_{t,k'}^{(s)}) - 2^{1-s}\}$, $s\gets s+1$.
    }
    \Else{
        Choose $k_t\in\hat{A}_s$ such that $w_{t,k_t}^{(s)}>2^{-s}$ and update: 
        $\Psi_{t+1}^{(s)} \gets \Psi_t^{(s)}\cup\{t\}$, $\Psi_{t+1}^{(s')} \gets \Psi_t^{(s')}$ for $s'\neq s$. \label{algl:step2b}
    }    
} 
Update the REN model $R(\cdot)$ and $f_e(\cdot)$ using collected user feedbacks.
} 

\end{algorithm}

\subsection{Upper Confidence Bound for Uncertain Embeddings}\label{sec:confidence_bound}

For simplicity we follow the notation from~\cite{chu2011contextual} and denote the item embedding (context) as $\x_{t,k}$, where $t$ indexes the rounds and $k$ indexes the items. We define:
\begin{align*}
s_{t,k} &= \sqrt{\muu_{t,k}^\top \A_{t}^{-1} \muu_{t,k}} \in \mathbb{R}_+, \;\;
\D_t = [\muu_{\tau,k_{\tau}}]_{\tau \in \Psi_t} \in \mathbb{R}^{|\Psi_t|\times d}, \\
\y_t &= [r_{\tau,k_{\tau}}]_{\tau \in \Psi_t} \in \mathbb{R}^{|\Psi_t|\times 1}, \;\;
\A_t = \I_d + \D_t^\top \D_t, \\
\b_t &= \D_t^\top \y_t, \;\;\;\;\;\;\;\;\;\;\;\;\;\;\;\;\;\;\;\;\;\;\;\;
\widehat{r}_{t,k} = \muu_{t,k}^\top \hat{\tha} = \muu_{t,k}^\top \A_t^{-1} \b_t,
\end{align*}
where $\y_t$ is the collected user feedback. \lemref{lem:simple_cb} below shows that with $\lambda_d=1+\alpha=1+\sqrt{\frac{1}{2}\ln \frac{2TK}{\delta}}$ and $\lambda_u=4\sqrt{d} + 2\sqrt{\ln \frac{TK}{\delta}}$, the main equation in the paper is the upper confidence bound with high probability, meaning that it upper bounds the true reward with high probability, which makes it a reasonable score for recommendations. 

\begin{lemma}[\textbf{Confidence Bound}]\label{lem:simple_cb}
With probability at least $1 - 2 \delta / T$, we have for all $k\in[K]$ that
\begin{align*}
|\widehat{r}_{t,k} - {\x_{t,k}^*}^\top \tha^*| \leq (\alpha + 1)s_{t,k} + (4\sqrt{d} + 2\sqrt{\ln \frac{TK}{\delta}})\|\si_{t,k}\|_{\infty},
\end{align*}
where $\|\si_{t,k}\|_{\infty} = \max_i |\si_{t,k}^{(i)}|$ is the $L_{\infty}$ norm.
\end{lemma}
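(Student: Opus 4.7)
The plan is to decompose the error $\widehat{r}_{t,k} - {\x_{t,k}^*}^\top\tha^*$ into an estimator-error piece for $\tha^*$ and a current-context piece, then apply a different high-probability tool to each. Writing
\begin{equation*}
\widehat{r}_{t,k} - {\x_{t,k}^*}^\top \tha^* = \muu_{t,k}^\top(\hat\tha - \tha^*) + (\muu_{t,k} - \x_{t,k}^*)^\top \tha^*,
\end{equation*}
the second piece equals $-\xii_{t,k}^\top\tha^*$, a scalar Gaussian with variance at most $\|\tha^*\|^2\|\si_{t,k}\|_\infty^2\leq\|\si_{t,k}\|_\infty^2$. A one-dimensional Gaussian tail inequality plus a union bound over $k\in[K]$ then yields $|\xii_{t,k}^\top\tha^*|\leq 2\sqrt{\ln(TK/\delta)}\,\|\si_{t,k}\|_\infty$ with probability at least $1-\delta/T$, providing the $2\sqrt{\ln(TK/\delta)}\|\si_{t,k}\|_\infty$ summand of the stated width.

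For the first piece, I would substitute the closed form $\hat\tha=\A_t^{-1}\D_t^\top\y_t$, the reward model $y_\tau={\x_{\tau,k_\tau}^*}^\top\tha^*+\epsilon_\tau$, and the context decomposition $\x_{\tau,k_\tau}^*=\muu_{\tau,k_\tau}+\xii_{\tau,k_\tau}$ into the matrix identity $\y_t=\D_t\tha^*+\Xii_t\tha^*+\ep_t$, where $\Xii_t$ stacks the rows $\xii_{\tau,k_\tau}^\top$. Using $\D_t^\top\D_t=\A_t-\I_d$ gives
\begin{equation*}
\muu_{t,k}^\top(\hat\tha - \tha^*) = -\muu_{t,k}^\top\A_t^{-1}\tha^* + \muu_{t,k}^\top\A_t^{-1}\D_t^\top\ep_t + \muu_{t,k}^\top\A_t^{-1}\D_t^\top\Xii_t\tha^*.
\end{equation*}
The first summand is bounded by $s_{t,k}$ via Cauchy-Schwarz in the $\A_t^{-1}$ norm, using $\|\tha^*\|\leq 1$. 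The reward-noise summand is the martingale $\sum_\tau a_\tau\epsilon_\tau$ with $a_\tau=\muu_{t,k}^\top\A_t^{-1}\muu_{\tau,k_\tau}$ satisfying $\sum_\tau a_\tau^2=\muu_{t,k}^\top\A_t^{-1}\D_t^\top\D_t\A_t^{-1}\muu_{t,k}\leq s_{t,k}^2$, so Azuma-Hoeffding plus a union over $k$ supplies $\alpha\,s_{t,k}$ with $\alpha=\sqrt{\tfrac{1}{2}\ln(2TK/\delta)}$. Together these account for the $(\alpha+1)s_{t,k}$ portion of the bound.

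The main obstacle will be the remaining summand $\muu_{t,k}^\top\A_t^{-1}\D_t^\top\Xii_t\tha^*$, the bias from \emph{historical} context uncertainty that distinguishes REN from classical LinUCB. I would factor out $s_{t,k}$ via Cauchy-Schwarz and then study the $d$-dimensional conditionally-Gaussian vector $\A_t^{-1/2}\D_t^\top\Xii_t\tha^*$, whose covariance is controlled through the shrinkage inequality $\D_t\A_t^{-1}\D_t^\top\preceq\I_{|\Psi_t|}$ and the per-step variances ${\tha^*}^\top\Si_{\tau,k_\tau}\tha^*\leq\|\si_{\tau,k_\tau}\|_\infty^2$. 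A chi-square (Gaussian-norm) concentration then supplies the dimension factor $\sqrt{d}$. The subtle step is converting the historical $\max_\tau\|\si_{\tau,k_\tau}\|_\infty$ into the current $\|\si_{t,k}\|_\infty$ claimed by the lemma, which I expect to rely on a uniform normalization of the $\si$ sequence (consistent with $\|\si_{1,k}\|_\infty=1$ from the paper's later decay assumption) being absorbed into the $4\sqrt{d}$ prefactor. A final union bound over the three high-probability events collects the failure probability $2\delta/T$ stated in the lemma.
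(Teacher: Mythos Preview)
Your decomposition and the paper's are algebraically identical through the three terms
\[
-\muu_{t,k}^\top\A_t^{-1}\tha^*,\qquad \muu_{t,k}^\top\A_t^{-1}\D_t^\top(\y_t-\D_t\tha^*),\qquad (\muu_{t,k}-\x_{t,k}^*)^\top\tha^*,
\]
and you correctly bound the first by $s_{t,k}$ via Cauchy--Schwarz. The divergence is in how you handle the other two.

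For the current-context term $(\muu_{t,k}-\x_{t,k}^*)^\top\tha^*$, you use a one-dimensional Gaussian tail, which is tighter than what the paper does. The paper instead bounds this scalar crudely by the full vector norm $\|\Si_{t,k}^{1/2}\ep\|$ and then applies a $d$-dimensional Gaussian norm concentration. \emph{That} is where the $4\sqrt{d}$ in the lemma statement comes from --- it is an artifact of this loose step, not a contribution from historical context uncertainty.

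For the martingale term, the paper does \emph{not} split $\y_t-\D_t\tha^*$ into reward noise $\ep_t$ plus historical context noise $\Xii_t\tha^*$. It simply observes that $\E[\y_t-\D_t\tha^*]=0$ (the expectation being over both the reward randomness and the historical $\x_{\tau,k_\tau}^*\sim\NM(\muu_{\tau,k_\tau},\Si_{\tau,k_\tau})$) and that each coordinate is bounded because rewards lie in $[0,1]$; Azuma--Hoeffding on the whole vector then yields $\alpha s_{t,k}$ directly. The historical context uncertainty never appears as a separate term and never needs to be compared to the current $\|\si_{t,k}\|_\infty$.

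Consequently, your ``main obstacle'' --- bounding $\muu_{t,k}^\top\A_t^{-1}\D_t^\top\Xii_t\tha^*$ and converting $\max_\tau\|\si_{\tau,k_\tau}\|_\infty$ into the current $\|\si_{t,k}\|_\infty$ --- is a difficulty you created by over-decomposing, and the proposed conversion has no basis (there is no reason the current item's $\si_{t,k}$ should dominate past items' $\si_{\tau,k_\tau}$). The fix is to keep $\y_t-\D_t\tha^*$ intact in the Azuma step, and to recover the $4\sqrt{d}$ prefactor from the looser norm bound on the current-context Gaussian, exactly as the paper does.
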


\begin{proof}
Using the notation defined above, we have
\begingroup\makeatletter\def\f@size{8}\check@mathfonts
\begin{align}
&\quad\quad |\widehat{r}_{t,k} - {\x_{t,k}^*}^\top \tha^*| \nonumber\\
&\quad = |\muu_{t,k}^\top \A_t^{-1} b_t - {\x_{t,k}^*}^\top \A_t^{-1}(\I_d + \D_t^\top \D_t)\tha^* | \nonumber\\
&\quad = |\muu_{t,k}^\top \A_t^{-1} \D_t^\top \y_t - {\x_{t,k}^*}^\top \A_t^{-1}(\tha^* + \D_t^\top \D_t \tha^*) | \nonumber\\
&\quad =|\muu_{t,k}^\top \A_t^{-1} \D_t^\top \y_t - {\x_{t,k}^*}^\top \A_t^{-1} \D_t^\top \D_t \tha^* - {\x_{t,k}^*}^\top \A_t^{-1}\tha^* |\nonumber\\
&\quad =|(\muu_{t,k}^\top \A_t^{-1} \D_t^\top \y_t - \muu_{t,k}^\top \A_t^{-1} \D_t^\top \D_t \tha^*) + \muu_{t,k}^\top \A_t^{-1} \D_t^\top \D_t \tha^* - {\x_{t,k}^*}^\top \A_t^{-1} \D_t^\top \D_t \tha^* - {\x_{t,k}^*}^\top \A_t^{-1}\tha^* | \nonumber\\
&\quad = |\muu_{t,k}^\top \A_t^{-1} \D_t^\top (\y_t -  \D_t \tha^*) + (\muu_{t,k} - {\x_{t,k}^*})^\top \A_t^{-1} \D_t^\top \D_t \tha^* - {\x_{t,k}^*}^\top \A_t^{-1}\tha^* | \nonumber\\
&\quad = |\muu_{t,k}^\top \A_t^{-1} \D_t^\top (\y_t -  \D_t \tha^*) - (\Si_{t,k}^{1/2}\ep)^\top \A_t^{-1} \D_t^\top \D_t \tha^* - (\muu_{t,k} + \Si_{t,k}^{1/2}\ep)^\top \A_t^{-1}\tha^* | \nonumber\\
&\quad = |\muu_{t,k}^\top \A_t^{-1} \D_t^\top (\y_t -  \D_t \tha^*) - (\Si_{t,k}^{1/2}\ep)^\top\tha^* - (\muu_{t,k})^\top \A_t^{-1}\tha^* | \label{eq:merge_ad}\\
&\quad\leq | \muu_{t,k}^\top \A_t^{-1} \D_t^\top (\y_t -  \D_t \tha^*) | + \|\Si_{t,k}^{1/2}\ep\| + s_{t,k}. \label{eq:eliminate_theta}
\end{align}
\endgroup

To see \eqnref{eq:merge_ad} is true, note that $\A_t^{-1}\D_t^\top\D_t^\top + \A_t^{-1} = \A_t^{-1}(\D_t^\top\D_t^\top +\I_d) = \I_d$. To see \eqnref{eq:eliminate_theta} is true, note that since $\|\tha^*\| \leq 1$, we have $|(\Si_{t,k}^{1/2}\ep)^\top\tha^*| \leq \|\Si_{t,k}^{1/2}\ep\|$. Similarly for the last term in \eqnref{eq:eliminate_theta}, observe that
\begin{align}
& \| \A_t^{-1} \muu_{t,k}\| \nonumber \\
=& \sqrt{\muu_{t,k}^\top \A_t^{-1} \I_d \A_t^{-1} \muu_{t,k}}  \nonumber \\
\leq& \sqrt{\muu_{t,k}^\top \A_t^{-1} (\I_d + \D_t^\top\D_t) \A_t^{-1} \muu_{t,k}} \nonumber \\
=& \sqrt{\muu_{t,k}^\top \A_t^{-1} \muu_{t,k}} \nonumber \\
=& s_{t,k}. \label{eq:tri3}
\end{align}

For the first term in \eqnref{eq:eliminate_theta}, since $\E[\y_t - \D_t\tha^*] = 0$, and $\muu_{t,k}^\top \A_t^{-1} \D_t^\top \y_t$ is a random variable bounded by $\|\D_t \A_t^{-1} \muu_{t,k}\|$, by Azuma-Hoeffding inequality, we have
\begin{align}
&\Pr(|\muu_{t,k}^\top \A_t^{-1} \D_t^\top (\y_t -  \D_t \tha^*)| > \alpha s_{t,k}) \nonumber \\
\leq& 2\exp \left( -\frac{2\alpha^2 s_{t,k}^2}{\|\D_t \A_t^{-1} \muu_{t,k}\|^2} \right) \nonumber \\
\leq& 2\exp(-2\alpha^2) \label{eq:alpha} \\
=& \frac{\delta}{TK} \label{eq:union1},
\end{align}
where \eqnref{eq:alpha} is due to
\begin{align*}
s_{t,k}^2 &= \muu_{t,k}^\top \A_t^{-1} \muu_{t,k} \\
&= \muu_{t,k}^\top  \A^{-1}(\I_d + \D_t^\top\D_t) \A^{-1} \muu_{t,k} \\
&\geq \muu_{t,k}^\top  \A^{-1} \D_t^\top\D_t \A^{-1} \muu_{t,k} \\
&= \|\D_t \A_t^{-1} \muu_{t,k}\|^2.
\end{align*}

For the second term of \eqnref{eq:eliminate_theta}, $\|\ep ^\top \Si_{t,k}^{1/2} \|$, since $\ep^\top \Si_{t,k}^{1/2} \sim \NM(\0, \Si_{t,k})$, we can guarantee that with probability at most $\frac{\delta}{TK}$,
\begin{align}
\|\ep ^\top \Si_{t,k}^{1/2} \| 
> 2\sqrt{\lambda_{max}(\Si_{t,k})} (2\sqrt{d} + \sqrt{\ln \frac{TK}{\delta}}), \label{eq:union2}
\end{align}
where $\lambda_{max}(\Si_{t,k}) = \|\Si_{t,k}\|_{op}$ is the operator norm of the matrix $\Si_{t,k}$ corresponding to the $L_2$ vector norm.

Combining \eqnref{eq:eliminate_theta}, \eqnref{eq:union1}, and \eqnref{eq:union2}, with a union bound, we have that with probability at least $1 - \frac{2\delta}{T}$, for all actions $a \in [K]$,
\begin{align*}
|\widehat{r}_{t,k} - {\x_{t,k}^*}^\top \tha^*| 
&\leq (\alpha + 1)s_{t,k} + (4\sqrt{d} + 2\sqrt{\ln \frac{TK}{\delta}})\sqrt{\lambda_{max}(\Si_{t,k})}, \\
&= (\alpha + 1)s_{t,k} + (4\sqrt{d} + 2\sqrt{\ln \frac{TK}{\delta}})\|\si_{t,k}\|_{\infty},
\end{align*}
\end{proof}

\subsection{Regret Bound}
\lemref{lem:simple_cb} above provides a reasonable estimate of the reward's upper bound at time $t$. Based on this estimate, one natural next step is to analyze the regret after all $T$ rounds. Formally, we define the regret of the algorithm after $T$ rounds as
\begin{align}\label{eq:regret}
B(T) = \sum_{t=1}^T r_{t,k_t^*} - \sum_{t=1}^T r_{t,k_t},
\end{align}
where $k_t^*$ is the optimal item (action) $k$ at round $t$ that maximizes $\E[r_{t,k}] = \x_{t,k}^T\tha^*$, and $k_t$ is the action chose by the algorithm at round $t$. In a similar fashion as in~\cite{chu2011contextual}, SupREN calls BaseREN as a sub-routine. In this subsection, we derive the regret bound for SupREN with uncertain item embeddings.

\begin{lemma}[Azuma--Hoeffding Inequality]\label{lem:azuma}
Let $X_1, \dots, X_m$ be random variables with $|X_\tau|\leq a_\tau$ for some $a_1,\dots,a_m>0$. Then we have
\begin{align*}
\Pr(|\sum_{\tau=1}^m X_\tau - \sum_{\tau=1}^m \E[X_\tau|X_1,\dots,X_{\tau-1}]|\geq B)\leq 2\exp\left(-\frac{B^2}{2\sum_{\tau=1}^m a^2_\tau}\right).
\end{align*}
\end{lemma}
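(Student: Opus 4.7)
The plan is to prove this as a standard martingale concentration result by (a) centering the $X_\tau$'s into a martingale difference sequence, (b) applying Hoeffding's lemma conditionally on the past to control the moment generating function of each increment, and (c) using a Chernoff bound with an optimized parameter, followed by a union bound over the two tails.

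First I would set $Y_\tau = X_\tau - \E[X_\tau \mid X_1,\dots,X_{\tau-1}]$ and let $\mathcal{F}_\tau = \sigma(X_1,\dots,X_\tau)$. By construction $\E[Y_\tau \mid \mathcal{F}_{\tau-1}] = 0$, so $S_m = \sum_{\tau=1}^m Y_\tau$ is the centered sum that appears inside the probability in the lemma. Since $|X_\tau|\leq a_\tau$ almost surely, the conditional support of $X_\tau$ given $\mathcal{F}_{\tau-1}$ is contained in an interval of length at most $2a_\tau$, and shifting by the conditional mean puts $Y_\tau$ into an interval of the same length. Hoeffding's lemma then gives the conditional bound $\E[e^{\lambda Y_\tau}\mid \mathcal{F}_{\tau-1}] \leq \exp(\lambda^2 a_\tau^2/2)$ for every $\lambda\in\mathbb{R}$.

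Next I would iterate the tower property. Writing $\E[e^{\lambda S_m}] = \E[e^{\lambda S_{m-1}} \E[e^{\lambda Y_m}\mid \mathcal{F}_{m-1}]]$ and applying the Hoeffding bound to the inner conditional expectation, induction on $m$ yields $\E[e^{\lambda S_m}] \leq \exp\!\bigl(\lambda^2 \sum_{\tau=1}^m a_\tau^2/2\bigr)$. Markov's inequality applied to $e^{\lambda S_m}$ then gives $\Pr(S_m \geq B) \leq \exp(-\lambda B + \lambda^2 \sum_{\tau} a_\tau^2/2)$ for any $\lambda>0$; optimizing in $\lambda$ with the choice $\lambda = B/\sum_\tau a_\tau^2$ collapses this to $\exp(-B^2/(2\sum_\tau a_\tau^2))$. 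Applying the same argument to $-Y_\tau$ (whose conditional MGF satisfies the same bound) controls the lower tail, and a union bound over the two tails produces the factor of $2$ in the statement.

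I do not expect a real obstacle here: the lemma is the textbook Azuma--Hoeffding inequality and the one subtle point is simply tracking that ``$|X_\tau|\leq a_\tau$'' translates into a conditional range of width $2a_\tau$ for $Y_\tau$ rather than a naive $|Y_\tau|\leq 2a_\tau$ bound, since the latter would lose a factor of $4$ in the exponent and fail to match the stated constant. Keeping the range-based form of Hoeffding's lemma (rather than the symmetric-interval form) is what preserves the $2$ in the denominator of the exponent.
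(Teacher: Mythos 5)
Your proof is correct, and it is the standard textbook argument for Azuma--Hoeffding: center into a martingale difference sequence, apply Hoeffding's lemma conditionally, chain the moment generating function bounds via the tower property, optimize the Chernoff parameter, and union-bound the two tails. The paper itself states this lemma without proof, treating it as a classical result, so there is nothing to contrast with; your write-up simply supplies the omitted standard derivation. You are also right about the one delicate point: since $|X_\tau|\leq a_\tau$ confines the conditional law of $X_\tau$ to an interval of length $2a_\tau$, the range form of Hoeffding's lemma gives $\E[e^{\lambda Y_\tau}\mid \mathcal{F}_{\tau-1}]\leq \exp(\lambda^2 a_\tau^2/2)$, which is exactly what reproduces the constant $2\sum_\tau a_\tau^2$ in the denominator of the exponent.
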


\begin{lemma}\label{lem:regret_to_width}
With probability $1-2\delta S$, for any $t\in [T]$ and any $s\in [S]$:
\begin{enumerate}
\item $|\widehat{r}_{t,k} - \E[r_{t,k}]|\leq w_{t,k}$ for any $k\in[K]$,
\item $k_t^*\in \hat{A}_s$, and
\item $\E[r_{t,k^*_t}] - \E[r_{t,k}] \leq 2^{(3-s)}$ for any $k\in\hat{A}_s$.
\end{enumerate}
\end{lemma}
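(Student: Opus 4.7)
\textbf{Proof plan for Lemma~\ref{lem:regret_to_width}.} The plan is to reduce part (1) to Lemma~\ref{lem:simple_cb} applied at every level $s$, and then establish parts (2) and (3) jointly by induction on $s$, following the standard SupLinUCB-style argument of \citep{chu2011contextual} but carrying along the extra uncertainty term $(4\sqrt{d}+2\sqrt{\ln(TK/\delta)})\|\si_{t,k}\|_\infty$ that appears in our width $w_{t,k}$.

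\textbf{Step 1 (part 1 via union bound).} At each level $s$, the set $\Psi_t^{(s)}$ in SupREN is constructed so that whether index $\tau<t$ lies in $\Psi_t^{(s)}$ depends only on information available before the reward $r_\tau$ is observed; hence, conditional on the contexts in $\Psi_t^{(s)}$, the rewards $\y_t$ form a martingale difference sequence. This is exactly the regime in which Lemma~\ref{lem:simple_cb} applies, and it yields, at any fixed $(t,s)$ and with probability at least $1-2\delta/T$, that $|\widehat r_{t,k}^{(s)}-\E[r_{t,k}]|\le w_{t,k}^{(s)}$ for all $k\in[K]$. A union bound over the $T$ rounds gives $1-2\delta$ at each fixed $s$, and a further union bound over the $S=\ln T$ levels yields part (1) with probability at least $1-2\delta S$.

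\textbf{Step 2 (parts 2 and 3 by induction on $s$).} The base case $s=1$ is immediate: $\hat A_1=[K]$, so $k_t^*\in\hat A_1$, and assuming rewards lie in $[0,1]$ the gap $\E[r_{t,k_t^*}]-\E[r_{t,k}]\le 1\le 2^{3-1}$. For the inductive step, assume (2) and (3) hold at level $s$ and condition on the event of part (1). Let $\tilde k=\argmax_{k'\in\hat A_s}(\widehat r_{t,k'}^{(s)}+w_{t,k'}^{(s)})$. Because SupREN only enters the elimination branch when $w_{t,k}^{(s)}\le 2^{-s}$ for all $k\in\hat A_s$, part (1) gives
\begin{align*}
\widehat r_{t,k_t^*}^{(s)}+w_{t,k_t^*}^{(s)}
&\ge \E[r_{t,k_t^*}]
\ge \E[r_{t,\tilde k}]
\ge \widehat r_{t,\tilde k}^{(s)}-w_{t,\tilde k}^{(s)}
= \bigl(\widehat r_{t,\tilde k}^{(s)}+w_{t,\tilde k}^{(s)}\bigr)-2w_{t,\tilde k}^{(s)}\\
&\ge \max_{k'\in\hat A_s}\bigl(\widehat r_{t,k'}^{(s)}+w_{t,k'}^{(s)}\bigr)-2^{1-s},
\end{align*}
so by the definition of $\hat A_{s+1}$ we have $k_t^*\in\hat A_{s+1}$, proving (2) at level $s+1$. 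For (3), take any $k\in\hat A_{s+1}$; combining the elimination rule, part (1), and $k_t^*\in\hat A_s$ yields
\begin{align*}
\E[r_{t,k_t^*}]-\E[r_{t,k}]
&\le \bigl(\widehat r_{t,k_t^*}^{(s)}+w_{t,k_t^*}^{(s)}\bigr)-\bigl(\widehat r_{t,k}^{(s)}-w_{t,k}^{(s)}\bigr)\\
&\le \bigl(\widehat r_{t,k}^{(s)}+w_{t,k}^{(s)}+2^{1-s}\bigr)-\widehat r_{t,k}^{(s)}+w_{t,k}^{(s)}
\le 2\cdot 2^{-s}+2^{1-s}
= 2^{3-(s+1)},
\end{align*}
closing the induction.

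\textbf{Anticipated obstacle.} The routine algebra is mechanical; the only delicate point is the conditional independence needed for Lemma~\ref{lem:simple_cb} to apply inside each level $s$. One must verify that the SupREN level-assignment rule (Lines~\ref{algl:step2c} and~\ref{algl:step2b} of Algorithm~2) guarantees that, conditioned on $\Psi_t^{(s)}$ and the histories $\{(\muu_{\tau,k_\tau},\si_{\tau,k_\tau})\}_{\tau\in\Psi_t^{(s)}}$, the noise in the observed rewards remains a zero-mean martingale difference sequence. This is the analog of Lemma~14 in \citep{chu2011contextual}, and it is what ultimately justifies invoking Azuma--Hoeffding inside the proof of Lemma~\ref{lem:simple_cb} for every level simultaneously, at the cost of the $S=\ln T$ factor in the failure probability.
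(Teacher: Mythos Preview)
Your proposal is correct and follows exactly the approach the paper intends: the paper's own proof of \lemref{lem:regret_to_width} is simply a pointer to Lemma~15 of \citep{LinRel} (the SupLinRel/SupLinUCB elimination argument), noting that the only change is the new confidence width from \lemref{lem:simple_cb}. You have written out precisely that argument --- part~(1) via \lemref{lem:simple_cb} plus a union bound over rounds and levels, and parts~(2)--(3) by the standard induction on the level~$s$ --- and your identification of the conditional-independence requirement (the analog of Lemma~14 in \citep{chu2011contextual}) as the only nontrivial ingredient is exactly right.
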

\begin{proof}
The proof is a simple modification of that in \cite{LinRel} (Lemma 15) to accommodate modification in \lemref{lem:simple_cb}.
\end{proof}

\begin{lemma}\label{lem:s_to_sqrt}
In BaseREN, we have
\begin{align*}
(1+\alpha)\sum_{t\in\Psi_{T+1}} s_{t,k_t} \leq 5 \cdot (1+\alpha^2) \sqrt{d|\Psi_{T+1}|}.
\end{align*}
\end{lemma}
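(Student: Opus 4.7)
The plan is to adapt the standard elliptic-potential (determinant-trace) argument of LinUCB-style analyses (Chu et al.\ 2011, Lemma 16; Auer 2002, Lemma 15), with the mean embeddings $\muu_{t,k_t}$ playing the role of the contexts. It decomposes naturally into three pieces: a determinant telescoping identity, an AM--GM/trace bound, and a Cauchy--Schwarz step to convert $\sum_t s_{t,k_t}^2$ into a bound on $\sum_t s_{t,k_t}$.

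First I would invoke the matrix determinant lemma. Since $\A_{t+1} = \A_t + \muu_{t,k_t}\muu_{t,k_t}^\top$ whenever $t\in\Psi_{T+1}$,
\[\det(\A_{t+1}) = \det(\A_t)\bigl(1 + \muu_{t,k_t}^\top \A_t^{-1} \muu_{t,k_t}\bigr) = \det(\A_t)(1+s_{t,k_t}^2),\]
so telescoping from $\det(\A_1)=1$ gives $\prod_{t\in\Psi_{T+1}}(1+s_{t,k_t}^2) = \det(\A_{T+1})$. The eigenvalue AM--GM inequality then yields $\det(\A_{T+1}) \le (\mathrm{tr}(\A_{T+1})/d)^d \le (1+|\Psi_{T+1}|/d)^d$, where I use the standing boundedness $\|\muu_{t,k}\|\le 1$ to bound $\mathrm{tr}(\A_{T+1}) \le d + |\Psi_{T+1}|$. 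Taking logs gives
\[\sum_{t\in\Psi_{T+1}} \ln(1+s_{t,k_t}^2) \le d\ln\!\bigl(1+|\Psi_{T+1}|/d\bigr).\]

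Because $\A_t \succeq \I_d$ forces $s_{t,k_t}^2 \le \|\muu_{t,k_t}\|^2 \le 1$, I can upgrade the log-bound via $\ln(1+x) \ge x/2$ on $[0,1]$ to $\sum_t s_{t,k_t}^2 \le 2d\ln(1+|\Psi_{T+1}|/d)$, and one Cauchy--Schwarz step gives $\sum_{t\in\Psi_{T+1}} s_{t,k_t} \le \sqrt{2d|\Psi_{T+1}|\ln(1+|\Psi_{T+1}|/d)}$. Multiplying by $(1+\alpha)$ delivers the claimed $\sqrt{d|\Psi_{T+1}|}$ scaling. The main obstacle will be the constant bookkeeping --- matching the precise form $5(1+\alpha^2)\sqrt{d|\Psi_{T+1}|}$ requires either an Auer-style case split separating rounds where $s_{t,k_t}$ is ``large'' versus ``small'' (the large rounds being $O(d)$ in number because each multiplies $\det\A_t$ by a constant factor, so that the sum of their $s$-values is absorbed into a constant), or simply absorbing the residual $\sqrt{\ln(1+|\Psi_{T+1}|/d)}$ into the constant together with the elementary estimate $(1+\alpha) \le 1+\alpha^2$, valid for the regime $\alpha \ge 1$ in which Theorem~\ref{thm:regret_bound} sets $\alpha = \sqrt{(1/2)\ln(2TK/\delta)}$. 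The essential quantitative content --- the $\sqrt{d|\Psi_{T+1}|}$ rate --- falls out directly from the determinant telescoping above, with the $\alpha$-factor entering only as an external multiplier.
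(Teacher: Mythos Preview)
Your approach is correct and is precisely the elliptic-potential argument underlying the results the paper cites (Lemma~3 and Lemma~6 of Chu et al.\ 2011, Lemma~16 of Auer 2002) without further detail. Your closing observation---that the residual $\sqrt{\ln(1+|\Psi_{T+1}|/d)}$ is absorbed not into an absolute constant but into the $\alpha$-dependent prefactor via $\ln|\Psi_{T+1}|\le\ln T\le 2\alpha^2$ from the choice $\alpha=\sqrt{(1/2)\ln(2TK/\delta)}$---is exactly how the log-free form $5(1+\alpha^2)\sqrt{d|\Psi_{T+1}|}$ arises.
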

\begin{proof}
This is a direct result of Lemma 3 and Lemma 6 in \cite{chu2011contextual} as well as Lemma 16 in \cite{LinRel}.
\end{proof}

\begin{lemma}\label{lem:sigma_to_sqrt}
Assuming $\|\si_{1,k}\|_{\infty} = 1$ and $\|\si_{t,k}\|_{\infty} \leq \frac{1}{\sqrt{t}}$ for any $k$ and $t$, then for any $k$,
\begin{align*}
\sum_{t\in\Psi_{T+1}} \|\si_{t,k}\|_{\infty} \leq \sqrt{|\Psi_{T+1}|}
\end{align*}
\end{lemma}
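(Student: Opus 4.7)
The plan is a direct numerical estimate that exploits only the pointwise decay rate of the uncertainty, with no probabilistic content. First I would apply the hypothesis $\|\si_{t,k}\|_\infty \leq 1/\sqrt{t}$ termwise to reduce the target quantity to a purely numerical sum:
\begin{align*}
\sum_{t \in \Psi_{T+1}} \|\si_{t,k}\|_\infty \;\leq\; \sum_{t \in \Psi_{T+1}} \frac{1}{\sqrt{t}}.
\end{align*}
Since $t \mapsto 1/\sqrt{t}$ is monotonically decreasing in $t$, a simple rearrangement argument shows that for any fixed cardinality $N = |\Psi_{T+1}|$ and any $\Psi_{T+1} \subseteq \{1,2,\dots,T\}$, the right-hand side is maximized when $\Psi_{T+1}$ consists of the smallest $N$ positive integers, namely $\{1,2,\dots,N\}$. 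So it suffices to bound $\sum_{t=1}^N 1/\sqrt{t}$.

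Next, I would apply the standard integral-comparison estimate for decreasing summands. Because $t\mapsto 1/\sqrt{t}$ is decreasing on $[1,\infty)$,
\begin{align*}
\sum_{t=1}^N \frac{1}{\sqrt{t}} \;\leq\; \int_0^N \frac{dt}{\sqrt{t}} \;=\; 2\sqrt{N},
\end{align*}
which already delivers the desired order of magnitude $O(\sqrt{|\Psi_{T+1}|})$. This step is routine and requires no structural input beyond the $p=1/2$ tail of a generalized harmonic series.

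The main obstacle is not the order of magnitude but the sharp constant stated in the lemma: the naive integral comparison yields $2\sqrt{|\Psi_{T+1}|}$ rather than $\sqrt{|\Psi_{T+1}|}$. Recovering the stated constant would require slightly finer bookkeeping, such as invoking the initial condition $\|\si_{1,k}\|_\infty = 1$ to isolate the $t=1$ term and then sharpening the integral bound on the remaining tail, or alternatively using Cauchy–Schwarz in the form $\sum \|\si_{t,k}\|_\infty \leq \sqrt{N}\,\sqrt{\sum \|\si_{t,k}\|_\infty^2}$ together with a harmonic-series estimate (which costs a logarithmic factor but isolates $\sqrt{N}$ cleanly). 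In either case, a factor-of-two (or logarithmic) discrepancy is benign because it is absorbed into the constants of $\thmref{thm:regret_bound}$ via \lemref{lem:regret_to_width} and does not change the advertised $\widetilde{O}(\sqrt{Td})$ rate. Thus, the core argument is just monotonicity $+$ integral comparison, and only the precise constant requires any additional care.
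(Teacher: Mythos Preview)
Your approach is essentially identical to the paper's: bound each summand by $1/\sqrt{t}$, replace $\Psi_{T+1}$ by the worst case $\{1,\dots,|\Psi_{T+1}|\}$ using monotonicity, and then apply the integral comparison $\sum_{t=1}^N 1/\sqrt{t}\le \int_0^N t^{-1/2}\,dt$. Your observation about the constant is in fact sharper than the paper: the paper writes $\int_0^{|\Psi_{T+1}|} t^{-1/2}\,dt=\sqrt{t}\big|_0^{|\Psi_{T+1}|}=\sqrt{|\Psi_{T+1}|}$, which is an arithmetic slip (the antiderivative is $2\sqrt{t}$), so the correct bound from this argument is indeed $2\sqrt{|\Psi_{T+1}|}$, exactly as you computed. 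As you note, the extra factor of $2$ is harmless for \thmref{thm:regret_bound}, so there is nothing further to fix in the argument itself.
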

\begin{proof}
Since the function $f(t) = \frac{1}{\sqrt{t}}$ is convex when $t>0$, we have 
\begin{align*}
\sum_{t=1}^{|\Psi_{T+1}|}\frac{1}{\sqrt{t}}\leq \int_0^{|\Psi_{T+1}|} \frac{1}{\sqrt{t}} =\left. \sqrt{t} \right\vert_0^{|\Psi_{T+1}|} = \sqrt{|\Psi_{T+1}|}
\end{align*}
\end{proof}



\begin{lemma}\label{lem:psi_to_sqrt}
For all $s\in [S]$,
\begin{align*}
|\Psi_{T+1}^{(s)}| \leq 2^s \cdot \left(5(1+\alpha^2)\sqrt{d|\Psi_{T+1}^{(s)}|} + 4\sqrt{dT} + 2\sqrt{T\ln \frac{TK}{\delta}}\right).
\end{align*}
\end{lemma}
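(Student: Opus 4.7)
The plan is to exploit the defining condition of $\Psi_{T+1}^{(s)}$: by inspection of SupREN, an index $t$ is added to $\Psi_{t+1}^{(s)}$ (line analogous to \ref{algl:step2b}) only when BaseREN at level $s$ falls into the third branch, i.e.\ when $w_{t,k_t}^{(s)} > 2^{-s}$. Summing this strict lower bound over $t\in \Psi_{T+1}^{(s)}$ yields
\begin{align*}
2^{-s}\,|\Psi_{T+1}^{(s)}|
\;<\; \sum_{t\in\Psi_{T+1}^{(s)}} w_{t,k_t}^{(s)}.
\end{align*}
So the whole lemma reduces to upper-bounding the right-hand sum and then multiplying through by $2^s$.

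Next, I would expand $w_{t,k_t}^{(s)}$ using its definition from Line~\ref{algl:width} of BaseREN, splitting it into the ``$s_{t,k}$ part'' and the ``$\si_{t,k}$ part'':
\begin{align*}
\sum_{t\in\Psi_{T+1}^{(s)}} w_{t,k_t}^{(s)}
= (1+\alpha)\sum_{t\in\Psi_{T+1}^{(s)}} s_{t,k_t}
\;+\;\bigl(4\sqrt{d}+2\sqrt{\ln(TK/\delta)}\bigr)\sum_{t\in\Psi_{T+1}^{(s)}}\|\si_{t,k_t}\|_\infty.
\end{align*}
For the first sum, I would invoke \lemref{lem:s_to_sqrt} (applied with $\Psi_{T+1}$ replaced by $\Psi_{T+1}^{(s)}$, which is legitimate because BaseREN at level $s$ uses only the history indexed by $\Psi_t^{(s)}$, so the matrix $\A_t$ driving $s_{t,k_t}$ at this level is exactly the one the lemma assumes). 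This gives $5(1+\alpha^2)\sqrt{d\,|\Psi_{T+1}^{(s)}|}$. For the second sum, I would apply \lemref{lem:sigma_to_sqrt} and then use the trivial bound $|\Psi_{T+1}^{(s)}|\le T$ to get $\sqrt{T}$; distributing the prefactor yields $4\sqrt{dT}+2\sqrt{T\ln(TK/\delta)}$.

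Combining the two bounds and multiplying both sides by $2^s$ gives exactly the stated inequality. The argument is almost mechanical once the per-level index bookkeeping is handled correctly, so the only real subtlety (and the step most prone to error) is verifying that \lemref{lem:s_to_sqrt} transfers verbatim from the generic $\Psi_{T+1}$ used in its statement to the level-specific $\Psi_{T+1}^{(s)}$ used here — this relies on the fact that SupREN maintains an independent design matrix for each elimination level, so the rounds in $\Psi_{T+1}^{(s)}$ form a self-contained BaseREN instance. No other nontrivial obstacle arises.
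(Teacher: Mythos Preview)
Your proposal is correct and follows essentially the same route as the paper: decompose $\sum_{t\in\Psi_{T+1}^{(s)}} w_{t,k_t}^{(s)}$ into the $s_{t,k}$ and $\si_{t,k}$ parts, bound them via \lemref{lem:s_to_sqrt} and \lemref{lem:sigma_to_sqrt} (with $|\Psi_{T+1}^{(s)}|\le T$), and combine with the lower bound $\sum w_{t,k_t}^{(s)}\geq 2^{-s}|\Psi_{T+1}^{(s)}|$ from the third branch of SupREN. Your remark about the level-specific application of \lemref{lem:s_to_sqrt} is the only subtlety, and the paper handles it implicitly just as you describe.
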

\begin{proof}
This is true by combining \lemref{lem:s_to_sqrt}, \lemref{lem:sigma_to_sqrt}, and \lemref{lem:simple_cb} with a similar proving strategy as in Lemma 16 of~\cite{LinRel}. 
\begin{align}
\sum_{t \in\Psi_{T+1}^{(s)}} w_{t,k}^{(s)} &= (1+\alpha)\sum_{t\in\Psi_{T+1}} s_{t,k_t} + (4\sqrt{d} + 2\sqrt{\ln \frac{TK}{\delta}})\sum_{t\in\Psi_{T+1}} \|\si_{t,k}\|_{\infty} \label{eq:w_upper1} \\
&\leq 5 \cdot (1+\alpha^2) \sqrt{d|\Psi_{T+1}|} + (4\sqrt{d} + 2\sqrt{\ln \frac{TK}{\delta}})\sqrt{|\Psi_{T+1}|} \label{eq:w_upper2} \\
&\leq 5 \cdot (1+\alpha^2) \sqrt{d|\Psi_{T+1}|} + 4\sqrt{dT} + 2\sqrt{T\ln \frac{TK}{\delta}}, \label{eq:w_upper3}
\end{align}
where \eqnref{eq:w_upper2} is due to \lemref{lem:s_to_sqrt} and \lemref{lem:sigma_to_sqrt}. By Line \ref{algl:step2b} of \algref{alg:suprdn}, we have
\begin{align}
\sum_{t \in\Psi_{T+1}^{(s)}}  w_{t,k}^{(s)} \geq 2^{-s}|\Psi_{T+1}^{(s)}|. \label{eq:w_lower}
\end{align}
Combine \eqnref{eq:w_upper3} and \eqnref{eq:w_lower} yields this lemma.
\end{proof}

\begin{theorem}\label{thm:regret_bound}
If SupREN is run with $\alpha=\sqrt{\frac{1}{2}\ln \frac{2TK}{\delta}}$, with probability at least $1 - \delta$, the regret of the algorithm is
\begin{align}
O\left(\sqrt{Td\ln^3\left(\frac{KT\ln(T)}{\delta}\right)}\right).
\end{align}
\end{theorem}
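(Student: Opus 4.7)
The plan is to follow the SupLinUCB-style regret decomposition of~\cite{chu2011contextual}, with Lemma~\ref{lem:sigma_to_sqrt} absorbing the extra context-uncertainty term that is new to REN. First I would partition the horizon $[T]$ based on which branch of SupREN terminates each round: let $T_0$ be the set of rounds where step~2c fires (all widths below $1/\sqrt{T}$), and for $s=1,\dots,S$ with $S=\lceil\ln T\rceil$ let $T_s$ be the set of rounds where the final ``else'' branch fires at level $s$, so that $|T_s|=|\Psi_{T+1}^{(s)}|$ and $|T_0|+\sum_s|T_s|=T$.

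For $T_0$, the selected arm $k_t=\argmax_{k\in\hat A_s}(\hat r_{t,k}^{(s)}+w_{t,k}^{(s)})$ lies in $\hat A_s$, which contains the optimal $k_t^{*}$ by Lemma~\ref{lem:regret_to_width}(2). Combining the argmax property with Lemma~\ref{lem:regret_to_width}(1) twice yields per-round regret at most $2w_{t,k_t}^{(s)}\leq 2/\sqrt{T}$, so the total $T_0$ contribution is at most $2\sqrt{T}$. For each $T_s$, Lemma~\ref{lem:regret_to_width}(3) gives per-round regret at most $2^{3-s}$, and it remains to bound $|\Psi_{T+1}^{(s)}|$. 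The intermediate step in Lemma~\ref{lem:psi_to_sqrt} (before loosening $\sqrt{|\Psi_{T+1}^{(s)}|}$ to $\sqrt{T}$) actually gives $2^{-s}|\Psi_{T+1}^{(s)}|\leq G\sqrt{|\Psi_{T+1}^{(s)}|}$ with $G=5(1+\alpha^{2})\sqrt{d}+4\sqrt{d}+2\sqrt{\ln(TK/\delta)}=O(\sqrt{d}(1+\alpha^{2})+\sqrt{\ln(TK/\delta)})$; this yields $\sqrt{|\Psi_{T+1}^{(s)}|}\leq 2^{s}G$.

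Writing $|\Psi_{T+1}^{(s)}|=\sqrt{|\Psi_{T+1}^{(s)}|}\cdot\sqrt{|\Psi_{T+1}^{(s)}|}$ and bounding only one of the two factors by $2^{s}G$, the exponential $2^{s}$ cancels against the $2^{3-s}$ per-round regret, leaving $\mathrm{Reg}_s\leq 8G\sqrt{|\Psi_{T+1}^{(s)}|}$. I would then sum over $s$ via Cauchy--Schwarz, $\sum_{s}\sqrt{|\Psi_{T+1}^{(s)}|}\leq\sqrt{S\sum_{s}|\Psi_{T+1}^{(s)}|}\leq\sqrt{ST}$, to obtain $\sum_{s}\mathrm{Reg}_s\leq 8G\sqrt{ST}$. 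Plugging in $\alpha^{2}=\tfrac{1}{2}\ln(2TK/\delta)$, $S=\Theta(\ln T)$, and the definition of $G$, the product $(1+\alpha^{2})\cdot\sqrt{S}$ collapses $\ln^{2}(TK/\delta)\cdot\ln T$ into $\ln^{3}(KT\ln T/\delta)$ inside a single square root, yielding $B(T)\leq 2\sqrt{T}+O(\sqrt{Td\,\ln^{3}(KT\ln T/\delta)})$.

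The main obstacle will be keeping the $2^{s}$ factors in check: a naive bound $|\Psi_{T+1}^{(s)}|\leq 4^{s}G^{2}$ multiplied by $2^{3-s}$ inflates the level-$s$ regret by $2^{s}$, which summed over $s\leq S=\ln T$ yields a $T^{\ln 2}$ contribution, \emph{worse} than $\sqrt{T}$. The remedy is to keep one $\sqrt{|\Psi_{T+1}^{(s)}|}$ factor intact and defer aggregation to a single Cauchy--Schwarz step, which is precisely what forces the $3/2$ power of the logarithm (rather than $2$). A secondary technical point is the union bound: Lemma~\ref{lem:simple_cb} must hold simultaneously across all $S$ levels and all $T$ rounds, so $\delta$ should effectively be replaced by $\delta/(2S)$ when choosing $\alpha$; this rescaling inflates the argument of the logs (producing the $2\ln T+2$ factor inside the explicit constant) but leaves the asymptotic order unchanged.
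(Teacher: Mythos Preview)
Your proposal is correct and mirrors the paper's proof almost step for step: partition rounds by the SupREN branch that fires, use Lemma~\ref{lem:regret_to_width} for the per-round regret, cancel the $2^{\pm s}$ factors via (the unloosened form of) Lemma~\ref{lem:psi_to_sqrt}, aggregate $\sum_s\sqrt{|\Psi_{T+1}^{(s)}|}$ by Cauchy--Schwarz, and finally rescale $\delta\to\delta/(2S+2)$. The only step you do not mention is a closing Azuma--Hoeffding application to pass from the pseudo-regret $\sum_t(\E[r_{t,k_t^*}]-\E[r_{t,k_t}])$, which is what your lemmas actually bound, to the realized regret $B(T)$; the paper does this with $B=2\sqrt{2T\ln(2/\delta)}$ and absorbs it into the leading constant.
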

\begin{proof}
The proof is an extension of Theorem 6 in \cite{LinRel} to handle the uncertainty in item embeddings. We denote as $\Psi_0$ the set of trials for which an alternative is chosen in Line \ref{algl:step2c} of \algref{alg:suprdn}. Note that $2^{-S}\leq\frac{1}{\sqrt{T}}$; therefore $\{1,\dots,T\}=\Psi_0\cup \bigcup_s \Psi^{(s)}_{T+1}$. We have
\begin{align}
E[B(T)] 
&= \sum_{t=1}^T [E[r_{t,k_t^*}] - E[r_{t,k_t}] \nonumber \\
&= \sum_{t\in \Psi_0} [E[r_{t,k_t^*}] - E[r_{t,k_t}]
+ \sum_{s=1}^S \sum_{t\in \Psi_{T+1}^{(s)}} [E[r_{t,k_t^*}] - E[r_{t,k_t}] \nonumber \\
&\leq \frac{2}{\sqrt{T}}|\Psi_0| + \sum_{s=1}^S 8 \cdot 2^{-s}\cdot |\Psi_{T+1}^{(s)}| \label{eq:regret_to_psi} \\
&\leq \frac{2}{\sqrt{T}}|\Psi_0| + \sum_{s=1}^S 8 \cdot \left(5(1+\alpha^2)\sqrt{d|\Psi_{T+1}^{(s)}|} + 4\sqrt{dT} + 2\sqrt{T\ln \frac{TK}{\delta}}\right) \label{eq:psi_to_sqrt} \\
&\leq 2\sqrt{T} + 40(1+\ln \frac{2TK}{\delta})\sqrt{STd} + 32S\sqrt{dT} + 16S\sqrt{T\ln \frac{TK}{\delta}},
\end{align}
with probability $1-2\delta S$. \eqnref{eq:regret_to_psi} is by \lemref{lem:regret_to_width}, and \eqnref{eq:psi_to_sqrt} is by \lemref{lem:psi_to_sqrt}. By the Azuma-–Hoeffding inequality (\lemref{lem:azuma}) with $B=2\sqrt{2T}\sqrt{\ln\frac{2}{\delta}}$ and $a_{\tau}=2$, we have
\begin{align}\label{eq:after_azuma}
B(T) \leq 2\sqrt{T} + 44\cdot(1 + \ln \frac{2TK}{\delta})\sqrt{STd} + 32S\sqrt{dT} + 16S\sqrt{T\ln\frac{TK}{\delta}},
\end{align}
with probability at least $1-2\delta(S+1)$. To see this, note that $1-2\delta(S+1) < 1-2\delta S - \delta$ and that
\begin{align*}
2\sqrt{2T}\sqrt{\ln\frac{2}{\delta}}
\leq 4\sqrt{T}\sqrt{\ln \frac{2TK}{\delta}} 
\leq 4\cdot(1+\ln\frac{2TK}{\delta})\sqrt{STd}.
\end{align*}
Replacing $\delta$ by $\frac{\delta}{2S+2}$ and $S$ by $\ln T$ in \eqnref{eq:after_azuma} along with simplification gives us
\begingroup\makeatletter\def\f@size{8}\check@mathfonts
\begin{align*}
B(T)
&\leq 2\sqrt{T} + 44\cdot(1+\ln \frac{2TK(2S+2)}{\delta})\sqrt{T\ln T}\sqrt{d} + 32S\sqrt{dT} + 16S\sqrt{T\ln\frac{TK(2S+2)}{\delta}}\\
&\leq 2\sqrt{T} + 44\cdot(1+\ln \frac{2TK(2S+2)}{\delta})(1+\ln T)^{\frac{1}{2}}\sqrt{Td}  + 32S\sqrt{dT} + 16\ln T\sqrt{\ln\frac{TK(2S+2)}{\delta}}\sqrt{T}\\
&\leq 2\sqrt{T} + 44\cdot(1+\ln \frac{2TK(2\ln T+2)}{\delta})^{\frac{3}{2}}\sqrt{Td}\\
&\qquad\qquad + 32\cdot(1+\ln \frac{2TK(2\ln T + 2)}{\delta})\sqrt{dT} + 16\cdot(1+\ln \frac{2TK(2\ln T + 2)}{\delta})^{\frac{3}{2}}\sqrt{Td} \\
&\leq 2\sqrt{T} + 92\cdot(1+\ln \frac{2TK(2\ln T+2)}{\delta})^{\frac{3}{2}}\sqrt{Td},
\end{align*}
\endgroup
with probability $1-\delta$. Therefore we have
\begin{align*}
B(T)\leq 2\sqrt{T} + 92\cdot(1+\ln \frac{2TK(2\ln T+2)}{\delta})^{\frac{3}{2}}\sqrt{Td}=O(\sqrt{Td\ln^3(\frac{KT\ln(T)}{\delta}}),
\end{align*}
with probability $1-\delta$. 
\end{proof}
\thmref{thm:regret_bound} shows that even with the uncertainty in the item embeddings, our proposed REN can achieve the same rate-optimal sublinear regret bound as in~\cite{chu2011contextual}.

\begin{figure}[!tb]
\begin{center}
\subfigure{
\includegraphics[height=4.2cm]{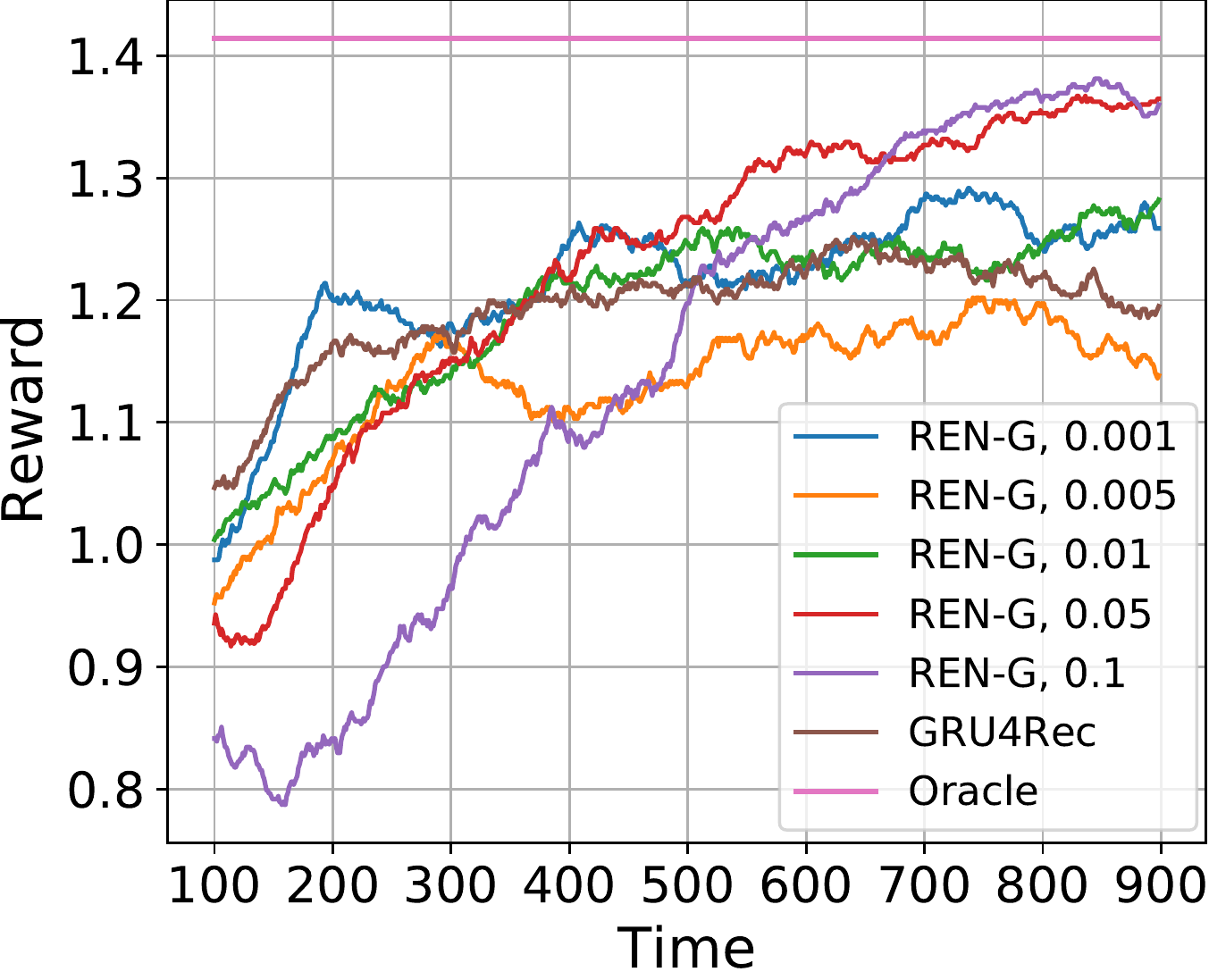}}
\subfigure{
\includegraphics[height=4.2cm]{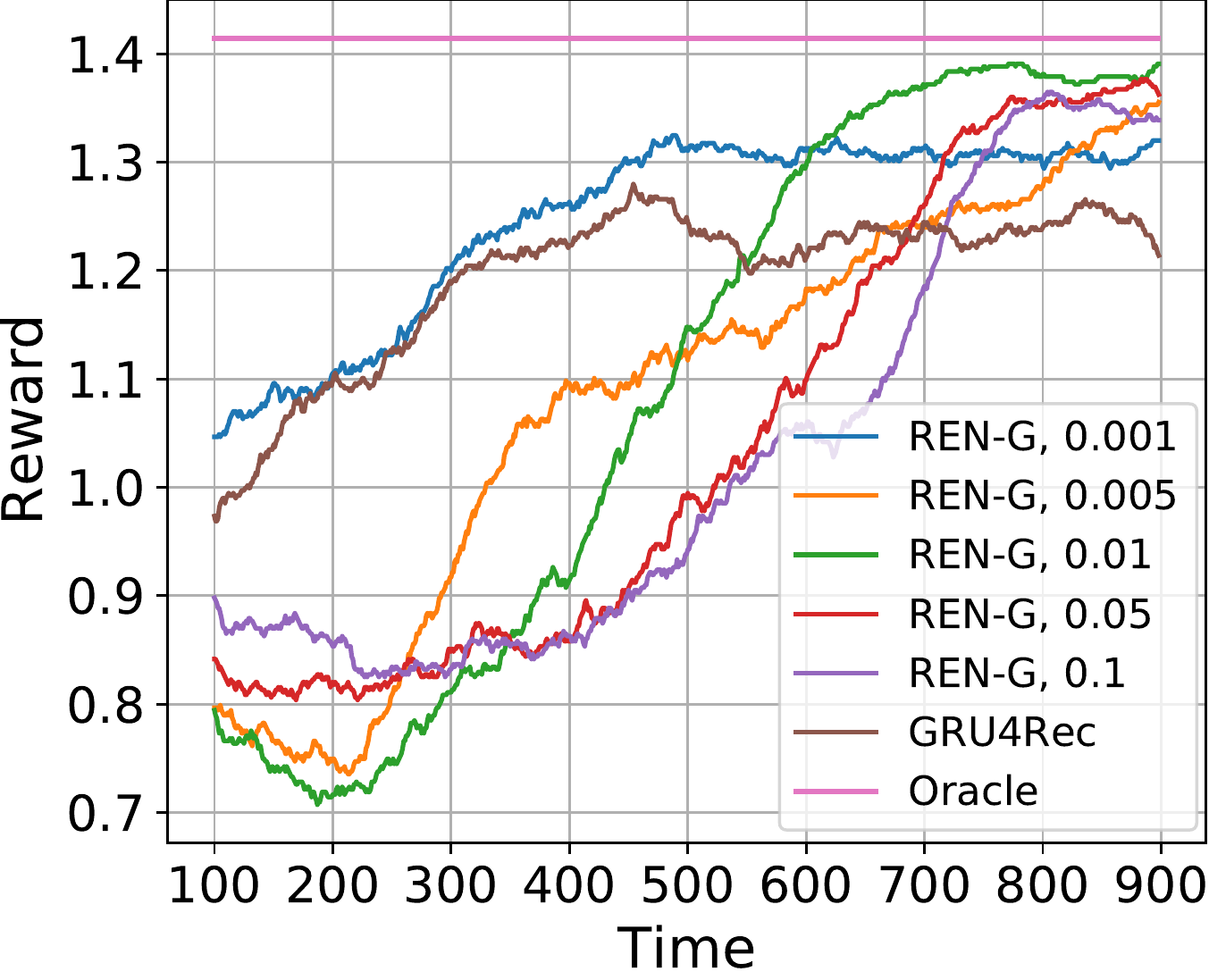}}
\subfigure{
\includegraphics[height=4.2cm]{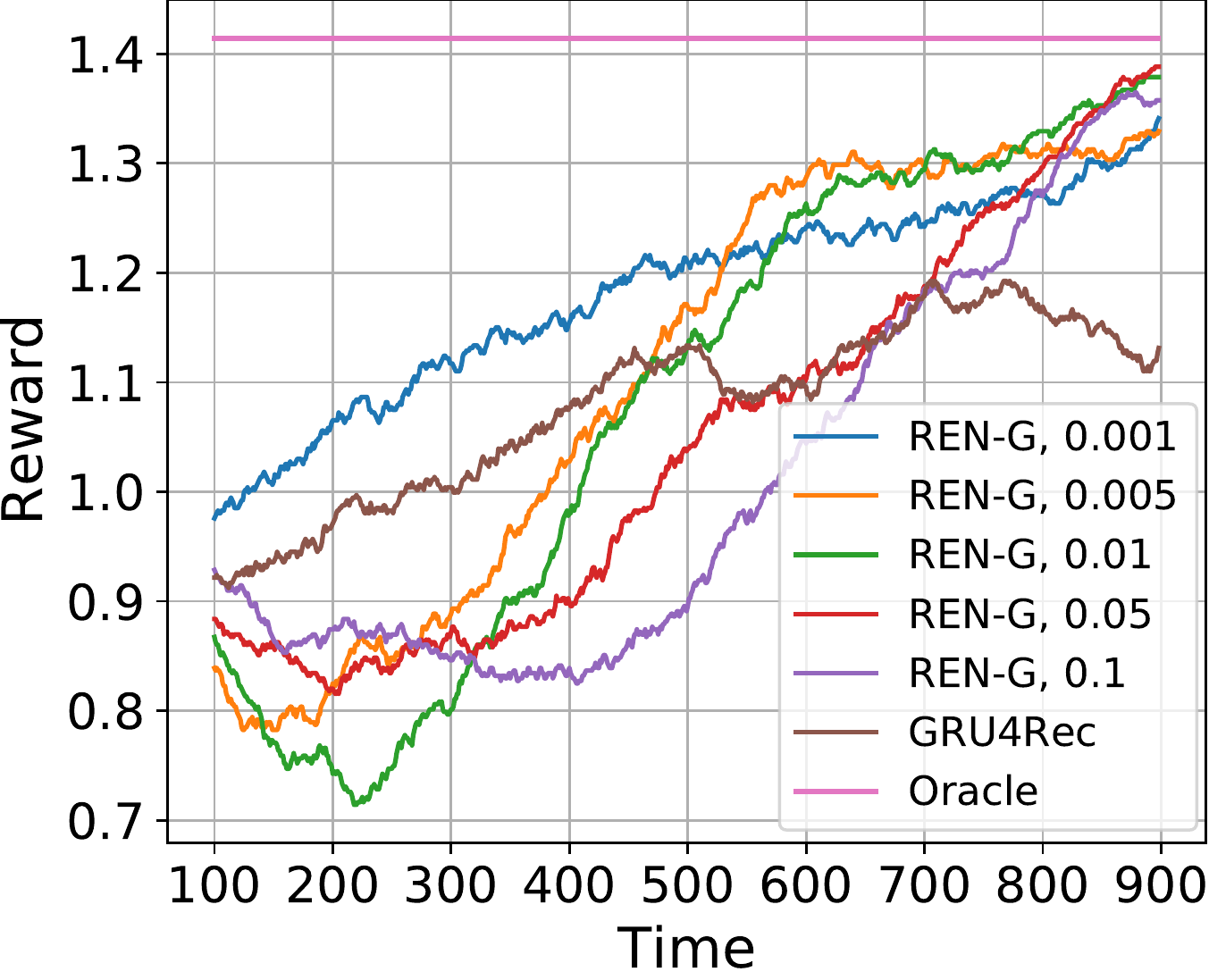}}
\end{center}
\vskip -0.2in
\caption{\label{fig:sensitivity}Hyperparameter sensitivity for $\lambda_d$ in \emph{SYN-S}, \emph{SYN-M}, and \emph{SYN-L}.
}
\vskip -0.0in
\end{figure}

\section{More Details on Datasets}
\subsection{\emph{MovieLens-1M}}
We use \emph{MovieLens-1M}~\cite{harper2016movielens} containing $3{,}900$ movies and $6{,}040$ users. Each user has $120$ interactions, and we follow the joint learning and exploration procedure described in the main paper to evaluate all methods.

\subsection{\emph{Trivago}}
\emph{Trivago} is a hotel recommendation dataset with $730{,}803$ users, $926{,}457$ items, and $910{,}683$ interactions. We use a subset with $57{,}778$ users, $387{,}348$ items, and $108{,}713$ interactions and slice the data into $M=48$ one-hour time intervals for the online experiment. Different from \emph{MovieLens-1M}, Triavago has impression data available. Specifically, at each time step, besides which item is clicked by the user, we also know which $25$ items are being shown to the user. Essentially the RecSys Challenge is a reranking problem with candidate sets of size $25$.




\subsection{\emph{Netflix}}
Our main conclusion with \emph{Netflix} experiments is that REN-inference-only procedure collects more diverse data points about a user, which allows us to build a more generalizable user model, which leads to better long-term rewards.
The main paper demonstrates better generalizability by comparing precision@100 reward on a holdout item set, where the items are inaccessible to the user - i.e., we never collect feedback on these holdout items in our simulations.
Instead, recommendations are made by comparing the users' learned embeddings and the pretrained embeddings of the holdout items.

\section{Hyperparameters and Neural Network Architectures}
For the base models GRU4Rec, TCN, and HRNN, we use identical network architectures and hyperparemeters whenever possible following~\cite{GRU4Rec,TCN,HRNN}. Each RNN consists of an encoding layer, a core RNN layer, and a decoding layer. We set the number of hidden neurons to $32$ for all models including REN variants. \figref{fig:sensitivity} shows the REN-G's performance for different $\lambda_d$ (note that we fix $\lambda_u = \sqrt{10}\lambda_d$) in \emph{SYN-S}, \emph{SYN-M}, and \emph{SYN-L}. We can observe stable REN performance across a wide range of $\lambda_d$. As expected, REN-G's performance is closer to GRU4Rec when $\lambda_d$ is small.

\end{document}